\pgfplotsset{compat=newest}
\newcommand{\gmmsigma}{0.1}   
\newcommand{\gmmx}{0.1}       
\newcommand{\gmmt}{3}         
\newcommand{\gmmT}{10}        
\newcommand{\KL}{{\mathrm{KL}}}
\def\ddefloop#1{\ifx\ddefloop#1\else\ddef{#1}\expandafter\ddefloop\fi}
\def\ddef#1{\expandafter\def\csname r#1\endcsname{{\ensuremath{\mathbb{#1}}}}}
\def\ddef#1{\expandafter\def\csname c#1\endcsname{{\ensuremath{\mathcal{#1}}}}}
\newcommand{\E}{{\mathbb{E}}}
\DeclareMathOperator*{\argmin}{argmin}
\newtheorem{definition}{Definition}
\newtheorem{lemma}{Lemma}
\title{Reverse Markov Learning: \\Multi-Step Generative Models for Complex Distributions}
\author[$\star$]{Xinwei Shen}
\author[$\dag$]{Nicolai Meinshausen}
\author[$\ddag$]{Tong Zhang}
\affil[$\star$]{Department of Statistics, University of Washington}
\affil[$\dag$]{Citadel Securities}
\affil[$\ddag$]{Department of Computer Science, University of Illinois Urbana-Champaign}
\date{}
\newcommand\xinwei[1]{{\color{cyan}Xinwei: #1}}
\newtheorem{theorem}{Theorem}
\newtheorem{assumption}{Assumption}
\begin{document}

\maketitle

\begin{abstract}
Learning complex distributions is a fundamental challenge in contemporary applications. \citet{shen2024engression} introduced engression, a generative approach based on scoring rules that maps noise (and covariates, if available) directly to data. While effective, engression can struggle with highly complex distributions, such as those encountered in image data. In this work, we propose reverse Markov learning (RML), a framework that defines a general forward process transitioning from the target distribution to a known distribution (e.g., Gaussian) and then learns a reverse Markov process using multiple engression models. This reverse process reconstructs the target distribution step by step. This framework accommodates general forward processes, allows for dimension reduction, and naturally discretizes the generative process. In the special case of diffusion-based forward processes, RML provides an efficient discretization strategy for both training and inference in diffusion models. We further introduce an alternating sampling scheme to enhance post-training performance. Our statistical analysis establishes error bounds for RML and elucidates its advantages in estimation efficiency and flexibility in forward process design. Empirical results on simulated and climate data corroborate the theoretical findings, demonstrating the effectiveness of RML in capturing complex distributions. 
	
\end{abstract}

\section{Introduction}

Modern applications of statistics and machine learning increasingly require learning entire distributions, either for more comprehensive uncertainty quantification or for generating realistic data. Consider a random vector $X\in\mathbb{R}^d$. Given an i.i.d.\ sample of $X$, classical parametric distribution families or nonparametric density estimation approaches, such as the kernel density estimation, have been well studied for simple or low-dimensional distributions of $X$. Generative models such as diffusion models~\citep{sohl2015deep,song2019generative,ho2020denoising}, on the other hand, have achieved remarkable success in generating samples from complex and high-dimensional data distributions in modern problems---from image synthesis to climate modeling. 

When covariates $Y \in \mathbb{R}^p$ are available, the goal shifts to learning the conditional distribution of $X$ given $Y$, denoted by $p^*_{X|Y}$. To this end, distributional regression approaches have been developed through estimating the cumulative distribution function~\citep{foresi1995conditional,hothorn2014conditional}, density function~\citep{dunson2007bayesian}, quantile function~\citep{meinshausen2006quantile}, etc. Alternatively, conditional generative models are potentially more powerful for high-dimensional responses.

\citet{shen2024engression} introduced \emph{engression}, a generative model-based method for learning distributions in the regression context. A generative model $g(Y,\varepsilon)$ maps the covariates $Y$ (if any) and Gaussian noise $\varepsilon$ to the data space $\mathbb{R}^d$. Engression is trained by minimizing the energy loss
\begin{equation}\label{eq:engression}
    g^*\in\argmin_{g\in\mathcal{G}} \E\left[\|X-g(Y,\varepsilon)\|_2 - \frac12\|g(Y,\varepsilon) - g(Y,\varepsilon')\|_2\right],
\end{equation}
where $\varepsilon$ and $\varepsilon'$ are two i.i.d.\ draws from the standard Gaussian and $\mathcal{G}$ is a function class. It holds that $g^*(y,\varepsilon)\sim p^*_{X|Y=y}$ for any fixed $y$, when such a $g^*$ exists. See Section~\ref{sec:eng} for details. 
Engression is very simple, tractable, and has been shown to be effective across diverse real applications, but its performance can degrade when the target distribution is highly complex.

This paper aims to address the challenge of learning complex distributions by proposing a framework termed Reverse Markov Learning (RML). Inspired by the success of diffusion models, we consider a forward process from the target distribution to a known distribution such as the standard Gaussian and then learn a {reverse Markov process} via multiple engression models to match all the reverse conditional distributions with the forward process. This ensures that the final state of this reverse process matches the target distribution. 
Intuitively, the task of learning a complex distribution is split into learning multiple simpler (conditional) distributions, which potentially leads to statistical gains, especially for complex, less smooth distributions. Moreover, we introduce a post-training technique that alternates between the forward and backward processes to further enhance the performance. 

Notably, RML can be viewed as a flexible, computationally efficient generalization of diffusion models or flow matching \citep{lipman2023flow}. Indeed, we show that the continuous limit of a special case of our approach recovers flow matching. Due to the flexibility of engression in learning conditional distributions, RML allows the forward process to be defined arbitrarily. When the forward process is chosen as a diffusion process or a linear interpolation between data and noise, our framework provides a computationally more efficient technique that discretizes training and sampling of diffusion models while not suffering from any discretization error.

This generality also addresses two major computational bottlenecks of diffusion models:
\begin{itemize}\vspace{-3pt}
	\itemsep 2pt
	\item High dimensionality: Diffusion processes typically operate in the original data space, making training and sampling expensive for high-dimensional data. A common workaround is to apply diffusion in a learned latent space~\citep{rombach2022high}, but this involves lossy compression and limits the degree of dimension reduction. In contrast, RML can reduce dimensionality within the forward process, achieving substantial computational savings in both training and sampling, while exactly preserving the data distribution by reverse Markov reconstruction. 
	\item Long sampling chain: Sampling in diffusion models often requires many steps to control discretization error, though distillation techniques such as consistency models~\citep{song2023consistency} can mitigate this issue after training. RML, by construction, uses a finite-step forward process, enabling a generation algorithm with a small, fixed number of steps without sacrificing correctness.
\end{itemize}

The remainder of the paper is organized as follows. Section~\ref{sec:method} introduces the RML framework  its theoretical correctness, and a refinement technique. Section~\ref{sec:theory} connects RML to flow matching. Section~\ref{sec:analysis_general} develops general error bounds, and Section~\ref{sec:analysis_specific} examines more properties of RML for specific examples. Section~\ref{sec:precip} presents an application to regional precipitation prediction. Section~\ref{sec:conclusion} concludes. Proofs, experimental details, and additional results are provided in the Appendix.


\section{Preliminaries and Motivations}\label{sec:eng}
\subsection{Scoring Rule-Based Generative Models}
To quantify how well a distributional model (e.g., a generative model) fits the observed data, \citet{shen2024engression} consider the energy score~\citep{gneiting2007strictly}, a widely used proper scoring rule. For a candidate distribution $p$ and an observation $x$, the energy score is defined as
\begin{equation*}
	S(p,x) = \frac12\E\|X-X'\|_2^\beta - \E\|X-x\|_2^\beta,
\end{equation*}
where $X$ and $X'$ are two i.i.d.\ draws from $p$, and $\beta\in(0,2)$ is a hyperparameter. The energy score is a strictly proper scoring rule, that is, given $p^*$ and for any $p$, we have
\begin{equation*}
	\E_{p^*}[S(p^*,X)] \ge \E_{p^*}[S(p,X)],
\end{equation*}
where the equality holds if and only if $p=p^*$. There are many choices of strictly proper scoring rules, such as the log score and kernel score; in fact, the energy score is a special case of the kernel score. We adopt the energy score with $\beta=1$ due to its simplicity and favorable computational and theoretical properties as investigated previously by \citet{shen2024engression}.

Each proper scoring rule is associated with a corresponding distance function. In particular, the respective distance of the energy score for two distributions $p$ and $q$ is the energy distance~\citep{szekely2003statistics} defined as
\begin{equation}\label{eq:energy_distance}
    2\E\|X-\tilde{X}\|_2-\E\|X-X'\|_2-\E\|\tilde{X}-\tilde{X}'\|_2,
\end{equation}
where $X$ and $X'$ are i.i.d.\ drawn from $q$ and $\tilde{X}$ and $\tilde{X}'$ from $q$. The energy distance can also be regarded as a variant of the maximum mean discrepancy (MMD) distance~\citep{gretton2012kernel}.

Recall that in distributional regression or conditional generation, the target is the conditional distribution of $X$ given covariates $Y=y$. For a generative model $g(y,\varepsilon)$, denote by $p_g(x|y)$ its induced distribution. Applying the expected energy score to the conditional distribution gives
\begin{equation*}
	\E_{p^*(x|y)}[S(p^*(x|y), X)] \ge \E_{p^*(x|y)}[S(p_g(x|y), X)],
\end{equation*}
where the equality holds if and only if $p_g(x|y)\equiv p^*(x|y)$.
Negating the right-hand side and taking expectation over $Y$ yields the objective of the engression method in \eqref{eq:engression}. It is also implied from this inequality that when optimized, we have $g^*(y,\varepsilon)\sim p^*_{X|Y=y}$ for any $y$ in the training support; see \citet[Proposition 1]{shen2024engression} for a formal statement.

As a generative modeling approach, engression learns a single map from noise to data, similar in goal to variational autoencoders~\citep{kingma2013auto} and generative adversarial networks~\citep{goodfellow2014}. Unlike these methods, however, engression avoids variational approximations or adversarial training, both of which require a second network (encoder or discriminator). Instead, it directly solves a single minimization problem, often yielding faster convergence and robust performance, and requiring less hyperparameter tuning~\citep{shen2024engression}.

\subsection{From One-Step to Continuous-Time Generative Models}
One-step generative models offer efficient test-time computation and allow for dimensionality reduction, enabling a more structured latent space. However, many real-world data distributions, especially in modern applications, are highly complex. For such cases, a single map from noise to data may exceed the model's capacity or be difficult to learn due to statistical or optimization challenges.

As an illustrative example, consider a poorly mixed Gaussian mixture (Figure~\ref{fig:mog}, left). Applying (unconditional) engression in one step (second panel) captures the modes but yields an overly smooth generator, producing many unrealistic samples in low-density regions.

\begin{figure}
\centering
\begin{tabular}{@{}cccc@{}}
	\includegraphics[width=0.22\textwidth]{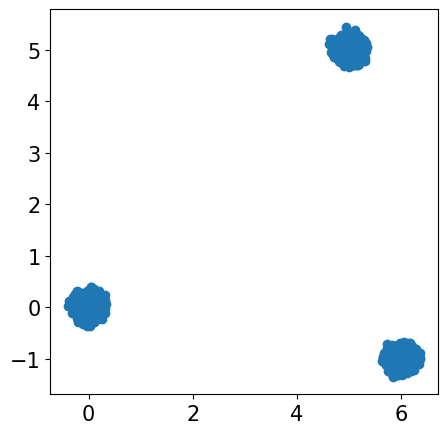} &
	\includegraphics[width=0.22\textwidth]{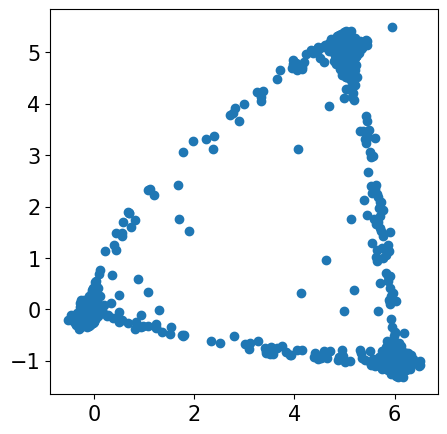} &
	\includegraphics[width=0.22\textwidth]{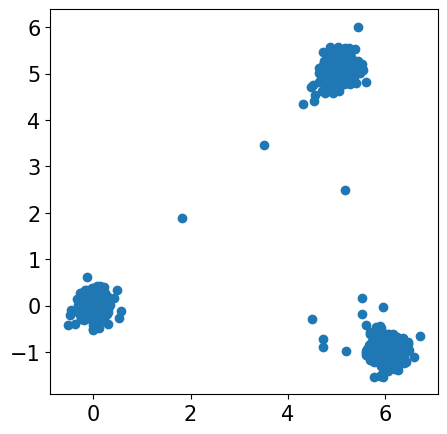} &
	\includegraphics[width=0.22\textwidth]{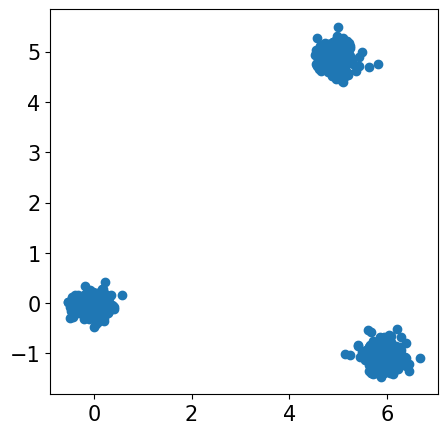} \\
	True data & Engression samples & \small{RML samples ($T=5$)} & \small{RML samples ($T=10$)}
\end{tabular}
\caption{An illustrative example of a mixture of three Gaussians, i.e., $\frac13\mathcal{N}(\mu_1,\sigma I_2)+\frac13\mathcal{N}(\mu_2,\sigma I_2)+\frac13\mathcal{N}(\mu_3,\sigma I_2)$, where $\mu_1=(0,0)$, $\mu_2=(5,5)$, $\mu_3=(6,-1)$, and $\sigma=0.1$. The plots show samples from the true distribution and estimated distributions by engression and the proposed Reverse Markov Learning (RML) method. }\label{fig:mog}
\end{figure}

Diffusion models and flow matching address this drawback by defining a forward process that gradually corrupts data with Gaussian noise and then approximating the reverse process by estimating the score function or a target vector field that generates the desired probability path. However, accurate sampling typically requires many steps to control discretization error, making inference computationally expensive. Distillation techniques, such as consistency models, have been explored to mitigate this cost post-training.

Motivated by these limitations, we propose multi-step (discrete-time) generative models that retain the representational power of diffusion models while ensuring correctness for any finite number of steps, thereby avoiding discretization error. We design the process so that intermediate distributions are substantially easier to learn than the final target and use scoring rule–based generative models, specifically engression, at each step for its simplicity and flexibility.

\section{Reverse Markov Learning}\label{sec:method}
We introduce the method for conditional distribution learning with covariates $Y$, while unconditional generation is a special case with $Y$ being an empty set. 

\subsection{Forward Stochastic Bridging Process}\label{sec:forward}

In the forward path, we consider a general $y$-conditioned stochastic process, referred to as a
stochastic bridging process defined below.
\begin{definition}
Given an unknown target distribution $p^*_{X|Y=y}$, which we would like to learn,
and a known distribution  $q^*_{X|Y=y}$ which is easy to sample from,
we call a stochastic process $\cP: \{X_t: t=0,1,\dots,T\}|Y=y$, a 
$y$-conditioned stochastic bridging process from $p^*$ to $q^*$, if it satisfies the following two conditions: 
\begin{itemize}
	\itemsep 0pt
    \item The marginal $X_0|Y=y$ is the unknown target distribution $p^*_{X|Y=y}$.
    \item   The marginal $X_T|Y=y$ is the known distribution
    $q^*_{X|Y=y}$.
\end{itemize}
\label{def:sbp}
\end{definition}
It is worth noting the generality of the forward process. While $q^*$ is usually taken as the standard Gaussian distribution in applications, we allow it to be an  arbitrary known distribution as long as it is easy to sample from. 
Moreover, the intermediate states $X_t$ ($t \notin \{0,T\}$) can in principle be defined in arbitrary ways; for each $t$, $X_t$ may have arbitrary 
dependency on $\{X_s: s \neq t\}$.
As we will notice later, the only condition for this forward process to be useful in facilitating learning the target distribution is that the process should be autocorrelated in some sense, so that the conditional distributions of $X_{t-1}$ given $X_t$ and $Y$ are relatively easy to learn. 

Below we list a few examples of a forward process including those that have appeared in the literature. 

\medskip
\noindent\textbf{Example I: diffusion process.} As in diffusion models, we can start with the data and consider a Markov process that gradually adds noise. For example, for $t=1,\dots,T$,
\begin{equation*}
    X_t | X_{t-1}\sim \mathcal{N}(\sqrt{1-\sigma_t}X_{t-1}, \sigma_t^2 I),
\end{equation*}
where $\sigma_t\in(0,1]$ controls the variance schedule and is an increasing function of $t$ such that $\sigma_T=1$. 

\medskip
\noindent\textbf{Example II: linear interpolation.} Conditional on $(X_0,X_T)$, we can define the forward process as a deterministic process
\begin{equation*}
    X_t = (1-t/T)X_0 + (t/T)X_T,
\end{equation*}
where $t=1,\dots,T$. This is commonly used as the forward process in flow matching. 

\medskip
\noindent\textbf{Example III: spatial pooling.} For spatial data like images or geographic maps, we can define the forward process based on scientifically meaningful operators such as average pooling
\begin{equation*}
    X_t = m(X_{t-1}),
\end{equation*}
where $m(\cdot)$ is the average pooling operator of a certain factor. For example, for a $r\times r$ spatial field $X$ and average pooling with a kernel size 2, $m$ is a deterministic map from $\mathbb{R}^{r\times r}$ to $\mathbb{R}^{(r/2)\times (r/2)}$. In Section~\ref{sec:precip}, we adopt this forward process in an application to climate prediction.

\subsection{Reverse Markov Sampling}

Consider a stochastic bridging process from $p^*$ to $q^*$, as in Definition~\ref{def:sbp}.
Given a time step $t \in \{1,\ldots,T\}$, 
we further denote by $p^*_{t}(x_{t-1},x_t|y)$ the joint distribution of $(X_{t-1},X_t)$ conditioned on $Y=y$,
and we denote by $p^*_{t}(x_{t-1}|x_t,y)$ the conditional distribution of $X_{t-1}|X_t,Y$. 
Using this notation, we can define a reverse $y$-conditioned Markov process 
$\{\tilde{X}_t, t=T,T-1,\dots,0\}$ as in Algorithm~\ref{alg:rms}.
The output of the algorithm is $\tilde{X}_0$.

\medskip

\begin{algorithm}[H]
\caption{Reverse Markov Sampling}\label{alg:rms}
\KwIn{Condition \(y\), bridging process $\cP$}
\KwOut{\(\tilde{X}_0\)}
Sample \(\tilde{X}_T \sim q^*_{X|Y=y}\)\;
\For{\(t = T, T-1, \ldots, 1\)}{
  Sample \(\tilde{X}_{t-1} \sim p^*_{t}(x_{t-1} \mid \tilde{X}_t, y)\)\;
}
\KwReturn{$\tilde{X}_0$}
\end{algorithm}
\medskip

 We have the following general result for this reverse Markov sampling process, which indicates that we can sample from the target distribution $p^*_{X|Y=y}$ using Algorithm~\ref{thm:rmp}, as long as we can sample from each Markov conditional distribution
 $p^*_{t}(x_{t-1} | x_t, y)$. 

 \begin{theorem}
 Consider $\tilde{X}_t$ generated according to Algorithm~\ref{alg:rms}.
 Let $\tilde{p}_t(\tilde{x}_{t-1},\tilde{x}_t|y)$
 be the joint density of $(\tilde{X}_{t-1},\tilde{X}_t)|Y=y$. 
 Then for all $t =1,\ldots,T$, we have
 \[
 \tilde{p}_t(x_{t-1},x_t|y)=p^*_t(x_{t-1},x_t|y) .
 \]
 This implies that the sampled distribution
 $\tilde{X}_0|Y=y$ is the same as that of the target distribution
 $p^*_{X|Y}(\cdot|y)$.
 \label{thm:rmp}
 \end{theorem}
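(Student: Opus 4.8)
The plan is a backward induction on $t$, from $t=T$ down to $t=0$, whose inductive hypothesis is that the conditional marginal law of $\tilde X_t$ given $Y=y$ coincides with the forward marginal law of $X_t$ given $Y=y$; denote both by $p^*(x_t\mid y)$. Once this is established at every level, the claimed identity for the two-step joints drops out immediately, and the statement about $\tilde X_0$ follows by taking $t=1$ and reading off the first coordinate.

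For the base case, Definition~\ref{def:sbp} asserts that the forward marginal $X_T\mid Y=y$ is $q^*_{X|Y=y}$, while the first line of Algorithm~\ref{alg:rms} draws $\tilde X_T\sim q^*_{X|Y=y}$; hence the two marginals agree at $t=T$. For the inductive step, suppose the marginal of $\tilde X_t\mid Y=y$ equals $p^*(x_t\mid y)$. By construction of Algorithm~\ref{alg:rms}, conditionally on $Y=y$ the variable $\tilde X_{t-1}$ is generated from the kernel $p^*_t(x_{t-1}\mid \tilde X_t, y)$ and depends on the past of the reverse chain only through $\tilde X_t$ (the Markov property of the sampler). Therefore the joint density factors as
\[
\tilde p_t(x_{t-1},x_t\mid y)
= p^*_t(x_{t-1}\mid x_t,y)\,\tilde p_t(x_t\mid y)
= p^*_t(x_{t-1}\mid x_t,y)\,p^*(x_t\mid y)
= p^*_t(x_{t-1},x_t\mid y),
\]
where the second equality is the inductive hypothesis and the last is the definition of $p^*_t$ as the forward joint of $(X_{t-1},X_t)$ given $Y=y$. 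This is exactly the displayed identity of the theorem at level $t$. Marginalizing both sides over $x_t$ then gives that the marginal of $\tilde X_{t-1}\mid Y=y$ equals $\int p^*_t(x_{t-1},x_t\mid y)\,dx_t = p^*(x_{t-1}\mid y)$, the forward marginal of $X_{t-1}$, which is the inductive hypothesis at level $t-1$. Iterating down to $t=1$ yields that $\tilde X_0\mid Y=y$ has law $p^*(x_0\mid y)=p^*_{X|Y}(\cdot\mid y)$.

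The argument is essentially bookkeeping; the one point that must be handled with care is the Markov (conditional-independence) structure of the sampler — that, conditionally on $\tilde X_t$, the newly drawn $\tilde X_{t-1}$ is independent of $\tilde X_{t+1},\dots,\tilde X_T$ — since this is precisely what licenses writing the two-step joint as the product of the exact reverse kernel and the current marginal. I would also emphasize that the forward process itself is \emph{not} assumed Markov, so the conclusion concerns only the pairwise joints $(\tilde X_{t-1},\tilde X_t)$ and the one-dimensional marginals, not the full path law; this is all that the theorem needs and all that the induction supplies. Finally, if any of these conditional densities fail to exist, the same proof goes through verbatim with densities replaced by regular conditional distributions and the integral over $x_t$ replaced by the corresponding marginalization of measures.
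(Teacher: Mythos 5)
Your proof is correct and follows essentially the same route as the paper's: backward induction on the marginal law of $\tilde X_t$, with the base case given by the matching terminal distributions and the inductive step given by factoring the two-step joint through the exact reverse kernel and then marginalizing. Your additional remarks on the Markov structure of the sampler and the non-Markov forward process are accurate clarifications of points the paper leaves implicit.
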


 
\subsection{Training Algorithm}

In order to implement Algorithm~\ref{alg:rms} in practice, we need to sample from the distribution 
\[
p^*_{t}(x_{t-1} | x_t, y). 
\]
We refer to this sampling problem as conditional reverse Markov sampling. Since this conditional distribution is generally unknown, practical implementation requires learning  it from the data.

We propose to learn this reverse Markov process via multi-step engression. Specifically, for each $t=1,\dots,T$, we aim to find a function $g_t$ such that the distribution of $g_t(x_t,y,\varepsilon_t)$ given $X_t=x_t$ and $Y=y$ matches the true one $p^*_{t}(x_{t-1}|x_t,y)$ in the forward process. Here $\varepsilon_t$ is assumed to be drawn from a Gaussian distribution. Denote $g(t,x_t,y,\varepsilon_t)=g_t(x_t,y,\varepsilon_t)$.

To this end, we define
\begin{equation*}
    g^*\in\argmin_{g}\E\left[\|X_{t-1}-g_t(X_t,Y,\varepsilon_t)\|_2 - \frac12\|g_t(X_t,Y,\varepsilon_t) - g_t(X_t,Y,\varepsilon_t')\|_2\right],
\end{equation*}
where $\varepsilon_t$ and $\varepsilon_t'$ are two i.i.d.\ draws from the standard Gaussian, and the expectation is taken over all random variables including $t\sim\mathrm{Unif}\{1,\dots,T\}$. 

When all $X_t$'s share the same dimension, we can parametrize all $g_t$'s as a shared function of $t,x_t,y$ and $\varepsilon_t$, similar to the time embedding in a score network in diffusion models. When we allow varying dimensions, we need separate models for $g_t$'s with different dimensions. 

We note that in standard training data, we observe a set of training data
\[
\mathcal{S}=\{(x^1,y^1),\ldots,(x^n,y^n)\} \sim p^*_{X,Y} .
\]
For each sample $(x^i,y^i) \in \cS$, we can then augment it to 
\[
(x_0^i,x_1^i,\ldots,x_T^i,y^i) 
\]
according to the underlying stochastic bridging process $\cP$. 
In Algorithm~\ref{alg:rml}, we assume that given $\cP$, and $t \in \{1,\ldots,T\}$, we can take sample
$(x_{t-1}^i,x^i_{t},y^i)$ from $\cP$ and the training data $\cS$. Here we know that
\[
x^i_{t-1},x^i_{t}|y^i \sim p^*_t(x^i_{t-1},x^i_{t}|y^i) .
\]
With this in mind, we can state the algorithm of Reverse Markov Learning (RML) as follows. 

{\centering
\begin{minipage}{\linewidth}
\vskip 0.1in

\SetKwFor{Iterate}{Iterate}{do}{end iterate}

\begin{algorithm}[H]
\DontPrintSemicolon
\KwInput{Training sample $\mathcal{S}$, bridging process $\cP$, batch size $m$}

\Iterate{\text{until converge}}{
Sample $t\sim\mathrm{Unif}\{1,\dots,T\}$\\
\For{$i=1\ldots,m$}{
Take $y^i$ from $\cS$\\
Take a sample $(x^{i}_{t-1},x^{i}_t)\sim p^*_{t}(x^i_{t-1},x^i_{t}|y^i)$\\
Sample $\varepsilon_{i},\varepsilon'_{i}\sim\mathcal{N}(0,I)$
}
Update parameters of $g_t$ by descending the gradients of
\begin{equation*}
	\frac{1}{m}\sum_{i=1}^m\left[\left\|x^{i}_{t-1} - g_t(x^{i}_t,y^i,\varepsilon_{i})\right\|_2 - \frac12\left\|g_t(x^{i}_{t},y^i,\varepsilon_{i}) - g_t(x^{i}_{t},y^i,\varepsilon'_{i})\right\|_2\right]
\end{equation*}
}
\KwReturn{Generators $\hat{g}_t:=g_t,t=1,\dots,T$}
\caption{Reverse Markov Learning (RML)}
\label{alg:rml}
\end{algorithm}
\end{minipage}
\vskip 0.1in
\par
}

\subsection{Reverse Markov Sampling with Learned Generator}

Using the generators $\{\hat{g}_t: t=1,\ldots,T\}$ learned from Algorithm~\ref{alg:rml}, we can instantiate Algorithm~\ref{alg:rms}
in Algorithm~\ref{alg:rmg} as the practical sampling algorithm. 

\medskip

\begin{algorithm}[H]
\caption{Reverse Markov Generation}\label{alg:rmg}
\KwIn{Condition \(y\), $q^*$, generators $\{\hat{g}_t: t=1,\ldots,T\}$}
\KwOut{\(\tilde{X}_0\)}
Sample \(\tilde{X}_T \sim q^*_{X|Y=y}\)\;
\For{\(t = T, T-1, \ldots, 1\)}{
$\varepsilon_t \sim N(0,I)$\\
  $\tilde{X}_{t-1} =\hat{g}_t(\tilde{X}_t,y,\varepsilon_t)$
}
\KwReturn{$\tilde{X}_0$}
\end{algorithm}
\medskip

As an immediate demonstration, for the illustrative example in Figure~\ref{fig:mog}, we show the generated samples from our method in the two plots on the right. With 5 steps, RML already exhibits a significant advantage over engression (RML with one time step). With 10 steps, RML can produce samples from the true mixture of Gaussians very well. In the Appendix, we further show the samples from Reverse Markov Generation at intermediate steps, from which we can see how the process evolves so that each reverse conditional distribution $p^*_{t}(x_{t-1}|x_t,y)$ is easier to learn than the original target $p^*_{X|Y=y}$.


\subsection{Alternating Markov Generation}

The forward stochastic bridging process is considered to be general, where given a generated sample $\tilde{X}_0$ at time $t=0$, we can generate samples $\tilde{X}_{t'}$ at time $t=t'$ using the forward processes. Assume that for each $t \in \{0,1,\ldots,T\}$, there exists $s(t) \in \{0,\ldots,t\}$ so that 
$X_t$ can be generated from $X_{s(t)}$ using the forward process. We note that we may always take $s(t)=0$, but it can be convenient to allow $s(t)>0$. Note that if $s(t)=t$, then the generation of $X_t$ from $X_{s(t)}$ is simply the identity map. 

\begin{assumption}
Assume for any $t \in \{0,1,\ldots,T\}$, there exists $s(t) \in \{0,\ldots,t\}$ so that the forward process $\cP$ in Definition~\ref{def:sbp} implies a known process $\cP'$ that can take $y$ and a sample $X_{s(t)}'$ with $y$-conditioned marginal $p_{s(t)}(\cdot|y)$, and generate a sample
$X_t'$ with $y$-conditioned marginal $p_t(\cdot|y)$. 
\label{assump:sbp-markov}
\end{assumption}

One may leverage Assumption~\ref{assump:sbp-markov} to design a generalized reverse Markov generation process, which alternates between reverse generation and forward process.
As we will see later, this procedure may have some theoretical advantages.

\begin{algorithm}[H]
\caption{Alternating Reverse Markov Generation}\label{alg:rmg-alt}
\KwIn{Condition \(y\), $q^*$, $\cP'$, generators $\{\hat{g}_t: t=1,\ldots,T\}$}
\KwOut{\(\tilde{X}_0\)}
Sample \(\tilde{X}_T \sim q^*_{X|Y=y}\)\;
\For{\(t = T, T-1, \ldots, 1\)}{
$\tilde{X}_t'=\tilde{X}_t$ \\
\For{$s= t, \ldots, s(t-1)+1$}{
$\varepsilon_s \sim N(0,I)$\\
  $\tilde{X}_{s-1}' =\hat{g}_{s}(\tilde{X}_s',y,\varepsilon_s)$
}
Use $\cP'$ to generate $\tilde{X}_{t-1}$ from $\tilde{X}_{s(t-1)}'$
}
\KwReturn{$\tilde{X}_0$}
\end{algorithm}

Note that in Algorithm~\ref{alg:rmg-alt}, we may also generate $\tilde{X}_{s(t-1)}'$ directly from $\tilde{X}_t'=\tilde{X}_t$ by learning a generator $\hat{g}_t$ in one-shot. 
We keep this iterative generation process to be consistent with the same training method, leaving only the generation process modified. It can be easily seen that by taking $s(t)=t$, we recover Algorithm~\ref{alg:rmg} from Algorithm~\ref{alg:rmg-alt}.

Similar to the proof of Theorem~\ref{thm:rmp}, it is also easy to see that the following theorem holds true. 
\begin{theorem}
Assume that $\hat{g}_t$ precisely learns the true reverse sampling conditional probability $p_t^*(x_{t-1}|x_t,y)$ for each $t$, then $\tilde{X}_0|Y=y$ is drawn from the target distribution $p_{X|Y}^*(\cdot|Y=y)$. 
\label{thm:rmp-alt}
\end{theorem}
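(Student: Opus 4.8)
\textbf{Proof proposal for Theorem~\ref{thm:rmp-alt}.}

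The plan is to reduce the correctness of Algorithm~\ref{alg:rmg-alt} to that of Algorithm~\ref{alg:rms} (equivalently, Theorem~\ref{thm:rmp}) by an induction on the outer loop index $t$, running from $t=T$ down to $t=0$, with the induction hypothesis that at the start of the iteration for a given $t$ the variable $\tilde X_t$ has $y$-conditioned marginal exactly $p^*_t(\cdot\mid y)$. The base case $t=T$ is immediate since we sample $\tilde X_T \sim q^*_{X\mid Y=y} = p^*_T(\cdot\mid y)$. For the inductive step I would analyze the body of the outer loop in two stages. First, the inner \texttt{for} loop over $s = t, t-1, \dots, s(t-1)+1$ is exactly a run of the reverse Markov sampler of Algorithm~\ref{alg:rms} restricted to the block of indices from $t$ down to $s(t-1)$, started from $\tilde X_t'=\tilde X_t$, which by hypothesis is distributed as $p^*_t(\cdot\mid y)$. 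Under the assumption that each $\hat g_s$ reproduces the true reverse conditional $p^*_s(x_{s-1}\mid x_s,y)$ exactly, the same argument as in Theorem~\ref{thm:rmp} (telescoping the joint densities $\tilde p_s(x_{s-1},x_s\mid y) = p^*_s(x_{s-1}\mid x_s,y)\,\tilde p_s(x_s\mid y)$ down the chain) gives that $\tilde X_{s(t-1)}'$ has $y$-conditioned marginal $p^*_{s(t-1)}(\cdot\mid y)$.

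Second, the step ``use $\cP'$ to generate $\tilde X_{t-1}$ from $\tilde X_{s(t-1)}'$'' is handled directly by Assumption~\ref{assump:sbp-markov}: since $\tilde X_{s(t-1)}'$ has $y$-conditioned marginal $p_{s(t-1)}(\cdot\mid y)$ and $\cP'$ is the known forward-compatible process that maps a sample with marginal $p_{s(t-1)}(\cdot\mid y)$ to one with marginal $p_{t-1}(\cdot\mid y)$, the output $\tilde X_{t-1}$ has $y$-conditioned marginal $p^*_{t-1}(\cdot\mid y)$. This closes the induction. Applying it at $t=0$ yields that $\tilde X_0\mid Y=y \sim p^*_0(\cdot\mid y) = p^*_{X\mid Y}(\cdot\mid Y=y)$, which is the claim.

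I would also note the one subtlety worth spelling out: unlike Theorem~\ref{thm:rmp}, here the claim is only about marginals of the $\tilde X_t$, not about the full joint law of the generated trajectory matching the forward process — the alternating scheme deliberately re-randomizes via $\cP'$ and so need not preserve joints. Consequently the induction should be phrased purely in terms of the one-time marginal $\tilde X_t \mid Y=y$, and the telescoping identity for the inner loop should be invoked only to extract the marginal of $\tilde X_{s(t-1)}'$, discarding the joint information. The main (minor) obstacle is bookkeeping: making sure the inner-loop block indices line up so that the reverse sampler is applied to a contiguous segment ending exactly at $s(t-1)$, and that the hypothesis feeding into Assumption~\ref{assump:sbp-markov} is the \emph{marginal at index $s(t-1)$} rather than at index $t-1$ or $t$. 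Everything else is a verbatim repetition of the Theorem~\ref{thm:rmp} argument, which is why the paper can legitimately say ``it is also easy to see.''
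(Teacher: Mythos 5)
Your proof is correct and takes exactly the approach the paper intends: the paper gives no separate proof of Theorem~\ref{thm:rmp-alt}, only the remark that it follows ``similar to the proof of Theorem~\ref{thm:rmp},'' and your induction on the $y$-conditioned marginal of $\tilde{X}_t$ --- using the Theorem~\ref{thm:rmp} telescoping argument on the inner reverse-sampling block to get the marginal at index $s(t-1)$, then invoking Assumption~\ref{assump:sbp-markov} to push it forward to index $t-1$ --- is precisely that argument spelled out. Your caveat that only one-time marginals (not the joint law of the trajectory) are preserved under the alternating scheme is an accurate and worthwhile clarification.
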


As we will see in Section~\ref{sec:analysis_general}, under certain models, the error in Theorem~\ref{thm:rmp-alt} can be well controlled even if $\hat{g}_t$ only learns $p_t^*(x_{t-1}|x_t,y)$ approximately. Intuitively, the forward process can correct some errors of the learned $\hat{g}$ in Algorithm~\ref{alg:rmg-alt} so that the resulting procedure is more robust. Below we first show some empirical evidence. 

Recall the illustrative example in Figure~\ref{fig:mog}, where the performance of RML with 5 time steps is not satisfying enough since it is producing more samples in the low-density region. We apply the alternating scheme to the fitted model; specifically, we apply Algorithm~\ref{alg:rmg-alt} for generation with $s(t)\equiv0$. Figure~\ref{fig:bounce} shows the generated samples at $s=1$ for $t=5,4,\dots,1$ in Algorithm~\ref{alg:rmg-alt}, respectively. We can see almost each time alternating helps improve the performance incrementally, and eventually the full alternating algorithm gets rid of all low-density samples and learns the distribution well. 

\begin{figure}
\centering
	\includegraphics[width=\textwidth]{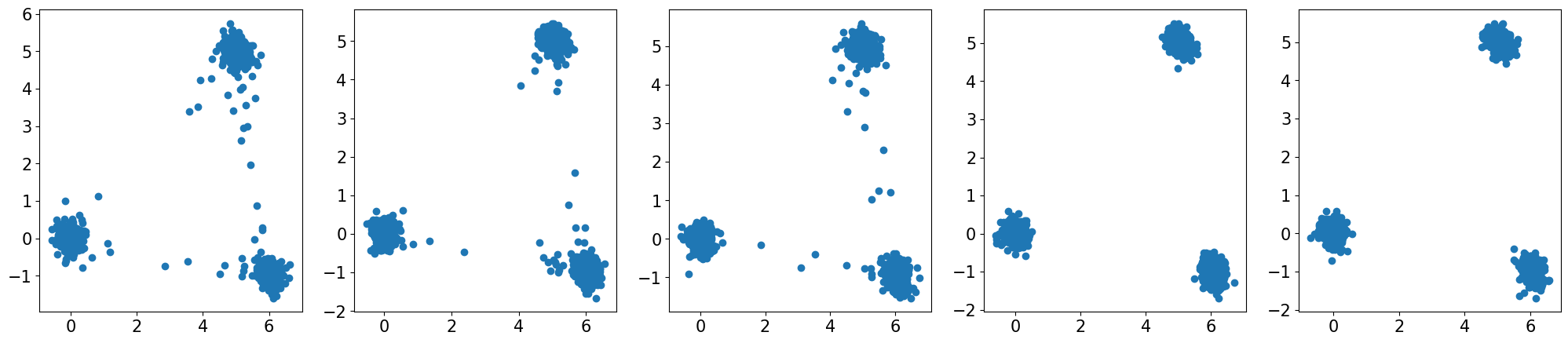}
	\caption{Generated samples at intermediate steps by Alternating Reverse Markov Generation in Algorithm~\ref{alg:rmg-alt} applied to the RML model trained with $T=5$ for the example in Figure~\ref{fig:mog}. From the left to right shows $t=5,4,\dots,1$.}\label{fig:bounce}
\end{figure}

\section{Relation to Flow Matching}\label{sec:theory}

We can also include additional variables, denoted by $Z$, into the condition of the general forward stochastic process as
\[
\{X_0, X_1,\ldots,X_T\}| Y=y, Z=z .
\]
Here, $Z$ represents auxiliary information available during training but not observed at inference, and thus cannot be used in the reverse generation process. Our algorithms remain directly applicable in this setting by simply ignoring $Z$ in the reverse Markov process, and Theorem~\ref{thm:rmp} continues to hold.

In what follows, we present a specific forward stochastic process whose continuous-time limit coincides with the flow matching method. This shows that our generation process can be viewed as a generalized discrete-time version of flow matching. Without loss of generality, we assume all target-dependent information is encoded in $Z$. For instance, one may take $Z = (X_0, \varepsilon)$, conditioned on $Y$, where $\varepsilon \sim q^*_{X|Y}$.

Given $Z$, the conditioning variable $y$, and a continuous time step $s \in [0,1]$, define the vector function
 \[
 h(Z,y,s) = (1-s)X_0 + s \varepsilon .
 \]
In this construction, $h(Z,y,0)$ has distribution $p^*_{X|Y}$ and $h(Z,y,1)$ has distribution $q^*_{X|Y}$. We now generalize this setup with the following definition.

\subsection{Continuous Differentiable Bridge Function}

The following definition can be used to define a continuous flow matching method.

\begin{definition}
Consider $y \in \cY$. 
Let $Z|Y=y$ be a $y$-conditioned random variable defined on $\cZ$  according to $\cP_{Z|Y}$,
and let $X$ be a $y$-conditioned random variable on $\rR^d$.
Let $p^*_{X|Y}$ and $q^*_{X|Y}$ be two distributions on $\rR^d$.
Consider a function 
\[
h(Z,y,s) : \cZ \times \cY \times [0,1] \to \rR^d 
\]
that is continuously differentiable in $s \in [0,1]$.
We call $(h,\cP_{Z|Y})$
a differentiable bridge function from $p^*$ to $q^*$ if for all $y \in \cY$:
\[
h(Z,y,0) : Z\sim \cP_{Z|Y}(\cdot|Y=y)\text{ has the same distribution as }  p^*_{X|Y}(X|Y=y) ,
\]
and
\[
h(Z,y,1): Z|Y=y \sim \cP_{Z|Y}(\cdot|Y=y) \text{ has the same distribution as }  q^*_{X|Y}(X|Y=y) .
\]
\label{def:diff_bridge}
\end{definition}

The following theorem characterizes the continuous flow matching process of
\cite{lipman2022flow} using the terminology of this paper.

\begin{theorem}\label{thm:continuous}
Let $p^*_{X|Y}$ and $q^*_{X|Y}$ be two distributions on $\rR^d$.
Let $(h,\cP_{Z|Y})$ be a differentiable bridge from $p^*$ to $q^*$. 
Define
\[
g(\tilde{x},y,s)= \E_{Z \sim \cP_{Z|Y}(\cdot|Y=y)}\left[ \frac{\partial}{\partial s} h(Z,y,s)\Big| h(Z,y,s)=\tilde{x}\right] .
\]
Let $\tilde{x}(x_1,y,s)$ be the solution of the differentiable equation (conditioned on $y$):
\begin{equation}
    \frac{\partial}{\partial s} \tilde{x}(x_1, y,s) =  g(\tilde{x}(x_1, y,s),y, s) , 
    \quad 
    \tilde{x}(x_1,y,1)=x_1.  \label{eq:flow-ode}
  \end{equation}
  Then the random variable
  \[
     h(Z,y,s): Z \sim \cP_{Z|Y}(\cdot|Y=y)
  \]
   has the same distribution as that of
  \[
    \tilde{x}(\tilde{X}_1,y,s), \qquad \tilde{X}_1 \sim q^*_{X|Y}(\cdot|Y=y) .
  \]
\end{theorem}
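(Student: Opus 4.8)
\textbf{Proof proposal for Theorem~\ref{thm:continuous}.}

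The plan is to show that the two random processes in $s$ — namely $s \mapsto h(Z,y,s)$ with $Z$ drawn once from $\cP_{Z|Y}(\cdot|Y=y)$, and $s \mapsto \tilde{x}(\tilde{X}_1,y,s)$ obtained by solving the ODE~\eqref{eq:flow-ode} backward from $s=1$ — induce the same marginal distribution at every $s\in[0,1]$. The natural route is the continuity equation: I would let $\mu_s$ denote the law of $h(Z,y,s)$ (conditioned on $Y=y$), compute the PDE it satisfies, and then check that the ODE flow transports $q^*_{X|Y}$ along the same PDE, so that by uniqueness of solutions to the continuity equation the two families of marginals coincide. Since the statement only concerns marginal distributions, not the joint law of the whole path, this is exactly the right level of generality.

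First I would fix $y$ and work throughout conditionally on $Y=y$, suppressing it from the notation. Define $\mu_s := \mathrm{Law}(h(Z,s))$. For a test function $\varphi \in C_c^\infty(\rR^d)$, differentiate $\int \varphi\, d\mu_s = \E[\varphi(h(Z,s))]$ in $s$, pushing the derivative inside the expectation (justified by the $C^1$-in-$s$ assumption on $h$ plus dominated convergence), to get $\frac{d}{ds}\int \varphi\, d\mu_s = \E\big[\nabla\varphi(h(Z,s))^\top \tfrac{\partial}{\partial s}h(Z,s)\big]$. Now condition on the event $h(Z,s)=\tilde{x}$ and use the tower property: this expectation equals $\E\big[\nabla\varphi(h(Z,s))^\top g(h(Z,s),s)\big] = \int \nabla\varphi(\tilde{x})^\top g(\tilde{x},s)\, d\mu_s(\tilde{x})$, which is the weak form of the continuity equation $\partial_s \mu_s + \nabla\cdot(\mu_s\, g(\cdot,s)) = 0$. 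This is the step where the definition of $g$ as a conditional expectation does all the work: it is precisely the object that makes the velocity field depend only on the current position.

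Next I would analyze the ODE side. Let $\nu_s := \mathrm{Law}(\tilde{x}(\tilde{X}_1,s))$ with $\tilde{X}_1\sim q^*_{X|Y}$. By the standard pushforward-under-a-flow computation — differentiating $\E[\varphi(\tilde{x}(\tilde{X}_1,s))]$ in $s$ and invoking~\eqref{eq:flow-ode} — one gets that $\nu_s$ solves the same continuity equation $\partial_s \nu_s + \nabla\cdot(\nu_s\, g(\cdot,s)) = 0$, now run backward in $s$ from the terminal condition $\nu_1 = q^*_{X|Y}$. Since $\mu_1 = \mathrm{Law}(h(Z,1)) = q^*_{X|Y}$ by the bridge property in Definition~\ref{def:diff_bridge}, the two curves $\{\mu_s\}$ and $\{\nu_s\}$ satisfy the same first-order linear transport PDE with the same value at $s=1$; a uniqueness theorem for the continuity equation then forces $\mu_s = \nu_s$ for all $s$, and in particular at $s=0$ (and at every $s$, which is the full claim). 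Evaluating at $s=0$ recovers that the ODE-generated sample $\tilde{x}(\tilde{X}_1,0)$ has law $p^*_{X|Y}$.

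The main obstacle is the technical justification rather than the conceptual structure: I expect the delicate points to be (i) the existence, uniqueness, and sufficient regularity of solutions to~\eqref{eq:flow-ode} — the velocity field $g(\cdot,s)$ is defined as a conditional expectation and is not assumed Lipschitz, so a fully rigorous argument would either impose mild regularity/integrability hypotheses or appeal to the general DiPerna–Lions / superposition-principle theory for the continuity equation, which guarantees that a weak solution is represented by a flow; and (ii) verifying the uniqueness class for the continuity equation (e.g.\ bounded vector field, or finite second moments along the flow) so that $\mu_s = \nu_s$ is legitimate. If the paper is content to work at the level of formal derivations — as is common in the flow-matching literature — these can be stated as standing assumptions and the argument above goes through cleanly; a careful version would add a short lemma asserting the required regularity of $h$ and hence of $g$.
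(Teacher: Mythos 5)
Your argument is correct, and it is in fact exactly the argument the paper names but declines to write out: the authors remark that ``the theorem can be proved by comparing the Fokker--Planck equations of the two processes'' (the continuity equation being the zero-diffusion Fokker--Planck equation) and then skip that proof, instead deriving Theorem~\ref{thm:continuous} as the $T\to\infty$ limit of Reverse Markov Sampling via Theorem~\ref{thm:continuous-discretized}. So the two routes genuinely differ in what they lean on. Your route is self-contained and standard in the flow-matching literature: the tower-property step turning $\E[\nabla\varphi(h)^\top \partial_s h]$ into $\int \nabla\varphi^\top g\,d\mu_s$ is precisely where the conditional-expectation definition of $g$ earns its keep, and the remaining work is the uniqueness class for the continuity equation, which you correctly identify as the technical burden (non-Lipschitz $g$, DiPerna--Lions or a superposition principle). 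The paper's route instead reuses Theorem~\ref{thm:rmp} (exactness of the discrete reverse Markov marginals for every finite $T$) plus the Taylor-expansion/vanishing-variance analysis in the proof of Theorem~\ref{thm:continuous-discretized}; this buys consistency with the paper's central narrative that flow matching is the continuous limit of RML, but it trades your PDE-uniqueness hypotheses for different regularity assumptions (uniformly bounded conditional variance of $\partial_s h$, uniform continuity of $g$) and for the convergence-in-Wasserstein bookkeeping of the $o(1/T)$ terms. Either set of hypotheses would need to be stated explicitly for a fully rigorous proof; your proposal is honest about that and would stand on its own as an alternative, arguably more direct, derivation.
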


The theorem can be proved by comparing the Fokker-Planck equations of the two processes. We will skip the proof. In this paper, we will treat it as the continuous limiting situation of Reverse Markov Sampling, as stated in Theorem~\ref{thm:continuous-discretized} below. 
Therefore the proof of Theorem~\ref{thm:continuous-discretized} directly implies Theorem~\ref{thm:continuous} under suitable regularity conditions. 

  We note that the flow matching method relies on the following optimization formula to learn $ g(\tilde{x},y, s)$ from data
  \begin{equation}
  \hat{g}=\argmin_g  \E_{Y} 
  \E_{Z \sim \cP_{Z|Y}(\cdot|Y)}
  \E_{s \sim U[0,1]} 
  \; \left[ \left(g(h(Z,Y,s),y,s)- \frac{\partial}{\partial s} h(Z,Y,s)\right)^2\right]  . 
  \label{eq:fm-learn}
  \end{equation}

    For example, if we observe examples $(X,Y)$ in training, and let $Z=(X,\varepsilon)$ with $\varepsilon \sim N(0,I)$
    $h(Z,y,s)=(1-s) X + s \varepsilon$, then the learning algorithm tries to minimize
    \[
    \hat{g}=\argmin_g  \E_{X,Y} \E_{\varepsilon \sim N(0,I)}
  \E_{s \sim U[0,1]} 
  \; \left[ \left(g((1-s)X+s\varepsilon,y,s)-  (\varepsilon-X)\right)^2\right]  . 
    \]

Using the $\hat{g}$ learned from \eqref{eq:fm-learn}, we can use the following generation process which discretizes the flow-ODE \eqref{eq:flow-ode}.

\medskip

\begin{algorithm}[H]
\caption{Reverse Flow-ODE Generation}\label{alg:fmg}
\KwIn{Condition \(y\), $q^*$, generators $\{\hat{g}(\tilde{x},y,s)\}$}
\KwOut{\(\tilde{X}_0\)}
Sample \(\tilde{X}_T \sim q^*_{X|Y}(\cdot|Y=y)\)\;
\For{\(t = T, T-1, \ldots, 1\)}{
  $\tilde{X}_{t-1} =\tilde{X}_t- (1/T) \hat{g}(\tilde{X}_t,y,t/T)$
}
\KwReturn{$\tilde{X}_0$}
\end{algorithm}

\subsection{Discretization of Continuous Flow Matching}

It is easy to see that we can define a stochastic bridging process from $p^*$ to $q^*$ using a differentiable bridge function as follows.
\begin{definition}
Let $p^*_{X|Y}$ and $q^*_{X|Y}$ be two distributions on $\rR^d$.
Let $(h,\cP_{Z|Y})$ be a differentiable bridge function from $p^*$ to $q^*$. 

Given positive integer $T>0$ and define a $y$-conditioned stochastic process as
\[
 \left\{X_t = h(Z,y,t/T) : t=0,1,\ldots,T\right\},  \;  Z \sim \cP_{Z|Y}(\cdot|Y=y).
\]
Then it is a stochastic bridging process from $p^*$ to $q^*$ implied by $(h,\cP_{Z|Y})$. 
\label{def:implied_sbp}
\end{definition}

Before stating the convergence result, we introduce the Wasserstein distance below.  
\begin{definition}[Wasserstein distance]
Let \(\mu,\nu\) be probability measures on $\rR^d$ with finite second moments, the Wasserstein-2 distance between \(\mu\) and \(\nu\) is defined by
\[
W_2(\mu,\nu)
:=
\biggl(
\inf_{\gamma \in \Pi(\mu,\nu)}
\int_{\mathbb{R}^d\times\mathbb{R}^d}
\|x - y\|_2^2
\,\mathrm{d}\gamma(x,y)
\biggr)^{1/2},
\]
where
$\Pi(\mu,\nu)$ is the set of probability measures $\gamma$ on $\rR^d \times \rR^d$ (referred to as a coupling of $\mu$ and $\nu$) such that
$\forall A\subset \rR^d,
\gamma(A\times\rR^d)=\mu(A),\;
\gamma(\rR^d\times A)=\nu(A)$.
\end{definition}

We have the following theorem, which shows that continuous flow matching is a limiting situation of our method.

\begin{theorem}\label{thm:continuous-discretized}
Let $p^*_{X|Y}$ and $q^*_{X|Y}$ be two distributions on $\rR^d$.
Let $(h,\cP_{Z|Y})$ be a differentiable bridge from $p^*$ to $q^*$. 
Define its implied stochastic bridging process 
\[
 \{X_t = h(Z,y,t/T) : t=0,1,\ldots,T\},  \;  Z \sim \cP_{Z|Y}(\cdot|Y=y).
\]
  Assume that $\frac{\partial}{\partial s} h(Z,y,s)\Big| h(Z,y,s)=\tilde{x}$ has uniformly bounded variance, and $g$ defined in Theorem~\ref{thm:continuous} is uniformly continuous,  then as $T \to \infty$,
  the distribution of $\tilde{X}_t$ converges to that of $X_t$ in Wasserstein distance, where
    $\tilde{X}_t$ is generated according to  Algorithm~\ref{alg:fmg} with $\hat{g}=g$. 
\end{theorem}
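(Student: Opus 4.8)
The plan is to show that the discrete iterates $\tilde{X}_t$ produced by Algorithm~\ref{alg:fmg} (with $\hat g = g$) track the continuous flow-ODE trajectory $\tilde x(\tilde X_1, y, t/T)$ from Theorem~\ref{thm:continuous}, and then invoke Theorem~\ref{thm:continuous} itself to identify the limit with $X_t = h(Z,y,t/T)$. Concretely, fix $y$ and couple the two processes by using the same initial draw $\tilde X_T = \tilde X_1 \sim q^*_{X|Y}(\cdot|y)$. Write $x^{\mathrm{ode}}_t := \tilde x(\tilde X_1, y, t/T)$ for the exact ODE solution evaluated at the grid points. Algorithm~\ref{alg:fmg} performs the explicit Euler step $\tilde X_{t-1} = \tilde X_t - (1/T) g(\tilde X_t, y, t/T)$, which is exactly the forward Euler discretization of \eqref{eq:flow-ode} run backward in $s$. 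So this reduces to a classical statement: forward Euler for an ODE with a (uniformly) continuous, bounded right-hand side converges uniformly to the exact solution as the step size $1/T \to 0$.

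The key steps, in order: (i) First I would record that $g$ is bounded — this follows from the assumption that $\frac{\partial}{\partial s} h(Z,y,s)\mid h(Z,y,s)=\tilde x$ has uniformly bounded variance together with boundedness of its mean; actually for the Euler argument we mainly need $g$ bounded and uniformly continuous, both of which are hypothesized or follow from the hypotheses. This gives an a priori bound on how far $\tilde X_t$ and $x^{\mathrm{ode}}_t$ can travel, confining the whole analysis to a compact region where uniform continuity gives a modulus of continuity $\omega(\cdot)$ with $\omega(\delta)\to 0$ as $\delta\to 0$. (ii) Next, standard one-step-error bookkeeping: the local truncation error of forward Euler on $[s-1/T, s]$ is controlled by $\int_{s-1/T}^s \|g(x^{\mathrm{ode}}(r)) - g(x^{\mathrm{ode}}(s))\|\,dr \le (1/T)\,\omega(\|g\|_\infty/T)$, using that $\|x^{\mathrm{ode}}(r) - x^{\mathrm{ode}}(s)\| \le \|g\|_\infty |r-s|$. (iii) Then I would accumulate errors across the $T$ steps. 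Writing $e_t := \|\tilde X_t - x^{\mathrm{ode}}_t\|$, one gets a recursion $e_{t-1} \le e_t + (1/T)\,\|g(\tilde X_t,y,t/T) - g(x^{\mathrm{ode}}_t, y, t/T)\| + (1/T)\,\omega(\|g\|_\infty / T)$. Without a Lipschitz bound on $g$ (only uniform continuity is assumed), the middle term is bounded by $(1/T)\,\omega(e_t)$, so the recursion is $e_{t-1}\le e_t + (1/T)\omega(e_t) + (1/T)\omega(\|g\|_\infty/T)$ with $e_T = 0$. (iv) Finally, summing and passing to the limit: since there are $T$ steps, the accumulated contribution of the truncation term is $\le \omega(\|g\|_\infty/T) \to 0$, and one shows $\sup_t e_t \to 0$ via a Grönwall-type / continuity argument on this discrete recursion — e.g. by a bootstrap: once we know $\sup_t e_t$ is bounded (it is, by $\le \|g\|_\infty$ on both sides), feed that into $\omega(e_t)$; the total is $\le \omega(\sup_t e_t) + \omega(\|g\|_\infty/T)$, and a fixed-point/monotonicity argument in $\sup_t e_t$ forces it to $0$ as $T\to\infty$. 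This gives $\tilde X_t \to x^{\mathrm{ode}}_t$ (in fact in the coupled sense, pathwise), hence in $W_2$ after noting the uniform second-moment bound that legitimizes the Wasserstein-2 distance. Theorem~\ref{thm:continuous} then says $x^{\mathrm{ode}}_t \stackrel{d}{=} X_t$, completing the claim.

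The main obstacle is step (iv): handling the error accumulation with only \emph{uniform} continuity of $g$ rather than a Lipschitz condition. With Lipschitz $g$ the discrete Grönwall inequality closes immediately and gives a clean $O(1/T)$ rate; with a bare modulus of continuity $\omega$ one must argue more carefully that the discrete recursion $e_{t-1}\le e_t + (1/T)\omega(e_t) + (1/T)\,\omega(\|g\|_\infty/T)$, iterated $T$ times, still drives $\sup_t e_t$ to zero. This is where I would be most careful — likely by first establishing the crude uniform bound $e_t \le \|g\|_\infty\,(T-t)/T \le \|g\|_\infty$ for all $t$ (so the trajectories stay in a fixed compact set and $\omega$ is well-defined on the relevant range), then showing that any subsequential limit of $\sup_t e_t$ must be a nonnegative number $L$ satisfying $L \le L$ trivially but also controlled by the vanishing truncation term, and using concavity/subadditivity-type properties of a regularized modulus of continuity to conclude $L=0$. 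A secondary (routine) point is the reduction from pathwise convergence under the chosen coupling to convergence in $W_2$: bound $W_2(\mathrm{Law}(\tilde X_t), \mathrm{Law}(X_t))^2 \le \E\, e_t^2$ using this coupling, and use the uniform second-moment / boundedness control to pass the expectation through the limit by dominated convergence.
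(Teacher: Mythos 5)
Your route is genuinely different from the paper's, and it has a gap at exactly the point you flag as the main obstacle. The paper never compares Algorithm~\ref{alg:fmg} with the continuous ODE solution; it compares it with the \emph{exact discrete} reverse Markov sampler (Algorithm~\ref{alg:rms}), whose output distribution equals that of $X_t$ for every finite $T$ by Theorem~\ref{thm:rmp}. A Taylor expansion of $h(Z,y,(t-1)/T)$ around $s=t/T$ shows that the reverse conditional $p^*_t(\cdot\mid X_t=\tilde{x}_t,y)$ has mean $\tilde{x}_t-(1/T)g(\tilde{x}_t,y,t/T)+o(1/T)$ and, by the bounded-variance hypothesis, per-step variance $O(1/T^2)$; the aggregated variance over $T$ steps is $O(1/T)$, so the reverse chain becomes deterministic and coincides with the Euler step in the limit. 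This sidesteps your step (iv) entirely, and it also explains the paper's logical ordering: Theorem~\ref{thm:continuous} is explicitly \emph{not} proved independently but is stated to follow from Theorem~\ref{thm:continuous-discretized}. Invoking Theorem~\ref{thm:continuous} to identify the limit of your Euler iterates therefore makes the argument circular within the paper, unless you supply the independent Fokker--Planck proof the paper skips.

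The analytic gap in step (iv) is real, not just a matter of care. With $g$ only uniformly continuous, the ODE \eqref{eq:flow-ode} need not have a unique solution (Peano gives existence only), and forward Euler need not converge to any particular solution; correspondingly, the recursion $e_{t-1}\le e_t+(1/T)\,\omega(e_t)+(1/T)\,\omega(C/T)$ with $e_T=0$ does \emph{not} force $\sup_t e_t\to 0$ without an Osgood-type condition $\int_0 du/\omega(u)=\infty$ on the modulus (consider $\omega(u)\asymp\sqrt{u}$, mirroring the classical non-uniqueness example $\dot x=\sqrt{|x|}$). The limiting inequality $L\le L$ you extract is vacuous, as you half-acknowledge, and no concavity or regularization of $\omega$ rescues it. A secondary unjustified claim is boundedness of $g$: the uniform bound on the conditional \emph{variance} of $\partial_s h$ says nothing about its conditional mean, and uniform continuity on $\rR^d\times[0,1]$ yields only linear growth, not boundedness (linear growth would suffice for the a priori trajectory bounds via discrete Gr\"onwall, but you should argue it that way rather than asserting $\|g\|_\infty<\infty$). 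If you want to keep your numerical-analysis framing, you would need to either strengthen uniform continuity to a Lipschitz or Osgood condition, or restructure the comparison as the paper does, against the exact reverse Markov chain rather than the ODE.
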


Our results establish a connection between the proposed method and the continuous flow matching approach, as well as its flow-ODE-based generation procedure in Algorithm~\ref{alg:fmg}. An advantage of Algorithm~\ref{alg:rmg} is that it is correct for any fixed, finite $T$. Increasing $T$ makes the reverse Markov process more deterministic, as shown in the proof of Theorem~\ref{thm:continuous-discretized}, and thus makes the learning process easier. Importantly, however, the correctness of Algorithms~\ref{alg:rmg} and~\ref{alg:rml} does not require $T \to \infty$. In contrast, both Algorithm~\ref{alg:fmg} and the continuous flow matching method require the limit $T \to \infty$ for correctness; any finite $T$ would introduce discretization error.

\section{Statistical Error Analysis }
\label{sec:analysis_general}

We have shown that if the reverse Markov distributions are learned perfectly with the generators $\hat{g}_t$ from Algorithm~\ref{alg:rml}, then both Algorithm~\ref{alg:rmg} and Algorithm~\ref{alg:rmg-alt} can generate the exact target distribution. However, in practice, imperfect learning introduces error. In this section, we analyze the propagation of such errors theoretically.
In the following, we assume that Algorithm~\ref{alg:rml} achieves small Wasserstein distance errors for any conditional probability. This is supported by the theoretical results of \cite{modeste2024characterization}.

\begin{assumption} \label{assump:wasserstein}
    We assume that the conditional probability
    $\hat{p}_t(X_{t-1}|X_t=x,y)$ induced by the generator $\hat{g}_t(x,y,\epsilon)$ learned from Algorithm~\ref{alg:rml}, with $\epsilon \sim \cN(0,I)$, satisfies the following condition for all $t$:
    \[
    \E_{y \sim p^*}  \E_{x \sim p_t^*(\cdot|y)} \big[W_2(\hat{p}_t(\cdot|x,y),p_t^*(\cdot|x,y))^2\big] \leq \delta_t^2 . 
    \]
\end{assumption}

In addition, we will make the following assumption.
\begin{assumption}[Generator Smoothness] \label{assump:rml-smooth}
Assume the learned generator $\hat{g}_t$ satisfies a Lipschitz-type  condition.
There exists $L_t$ so that 
\[ 
\|\hat{g}_t(x, y,\varepsilon)- \hat{g}_t(x', y,\varepsilon) \|_2 \leq L_t\cdot \|x'-x\|_2 \qquad
\forall x, x', y, \varepsilon .
\]
\end{assumption}

\begin{lemma}
Consider $0 \leq s<t \leq T$. Given $X_t \sim \hat{p}_t$ and $y \sim p_*$. Assume $\tilde{X}_s$ is generated by applying reverse Markov generation of $\hat{g}_t, \ldots, \hat{g}_{s+1}$ from $\tilde{X}_t$.  Let the marginal distribution of $\tilde{X}_s$ be 
$\hat{p}_s$. Let $\E_{y \sim p^*}\big[W_2(\hat{p}_s(\cdot|y), p_s^*(\cdot|y))^2\big] = u_s^2$ and $L_{s}^{t'} = \prod_{s'=s+1}^{t'} L_{s'}$. 
Then
\[
u_s \leq \sum_{t'=s+1}^t  L_{s}^{t'-1} \delta_{t'} + L_{s'}^t u_t .
\]
\label{lem:func-compose}
\end{lemma}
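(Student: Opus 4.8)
# Proof Plan for Lemma~\ref{lem:func-compose}

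The plan is to proceed by induction on the number of reverse steps taken from time $t$ down to time $s$, building a coupling at each step that controls the Wasserstein distance between the generated marginal $\hat p_{s'}$ and the true marginal $p_{s'}^*$. The key structural fact is that one reverse step introduces two sources of error: (i) the conditional-distribution error $\delta_{t'}$ bounded in Assumption~\ref{assump:wasserstein}, which enters afresh at step $t'$, and (ii) the propagation of the already-accumulated marginal error through the Lipschitz map $\hat g_{t'}$, which gets amplified by the factor $L_{t'}$ (and hence by the product $L_{s}^{t'-1}$ once it has passed through all subsequent steps). First I would set up the one-step recursion $u_{t'-1} \le \delta_{t'} + L_{t'} u_{t'}$ for a single application of $\hat g_{t'}$; unrolling this recursion from $t$ down to $s$ then gives exactly $u_s \le \sum_{t'=s+1}^{t} L_{s}^{t'-1}\delta_{t'} + L_{s}^{t} u_t$ (reading $L_s^{t'-1}$ as the product of Lipschitz constants for steps $s+1,\dots,t'-1$, with the empty product equal to $1$).

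The heart of the argument is the one-step bound. Fix $t'$ with $s < t' \le t$, and suppose we have a coupling of $\hat p_{t'}(\cdot|y)$ and $p_{t'}^*(\cdot|y)$ (for $y$-almost-every $y$) witnessing $\E_y[W_2(\hat p_{t'}(\cdot|y), p_{t'}^*(\cdot|y))^2] = u_{t'}^2$. I would construct a coupling of $\hat p_{t'-1}(\cdot|y)$ and $p_{t'-1}^*(\cdot|y)$ as follows: draw $(X_{t'}, X_{t'}^*)$ from the optimal coupling of the time-$t'$ marginals; then from $X_{t'}$ apply the learned reverse kernel $\hat p_{t'}(\cdot|X_{t'},y)$ to get $\tilde X_{t'-1}$, and from $X_{t'}^*$ we want a sample from $p_{t'-1}^*(\cdot|X_{t'}^*,y)$. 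To couple these last two, interpose an auxiliary point: draw $\hat g_{t'}(X_{t'}^*, y, \varepsilon)$ using the same noise $\varepsilon$ (this is a draw from $\hat p_{t'}(\cdot|X_{t'}^*,y)$), and separately realize the optimal coupling between $\hat p_{t'}(\cdot|X_{t'}^*,y)$ and $p_{t'}^*(\cdot|X_{t'}^*,y)$. The triangle inequality for the $L^2$-coupling distance then splits the time-$(t'-1)$ discrepancy into two pieces: $\|\hat g_{t'}(X_{t'},y,\varepsilon) - \hat g_{t'}(X_{t'}^*,y,\varepsilon)\|_2 \le L_{t'}\|X_{t'} - X_{t'}^*\|_2$ by Assumption~\ref{assump:rml-smooth}, whose squared expectation is $\le L_{t'}^2 u_{t'}^2$; and the conditional-coupling term whose squared conditional expectation given $X_{t'}^*,y$ is $W_2(\hat p_{t'}(\cdot|X_{t'}^*,y), p_{t'}^*(\cdot|X_{t'}^*,y))^2$, which after taking expectation over $X_{t'}^* \sim p_{t'}^*(\cdot|y)$ and $y \sim p^*$ is $\le \delta_{t'}^2$ by Assumption~\ref{assump:wasserstein}. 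Taking the $L^2$ norm of the sum (Minkowski) yields $u_{t'-1} \le \delta_{t'} + L_{t'} u_{t'}$.

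I expect the main obstacle to be the careful bookkeeping in the coupling construction — specifically, ensuring that the auxiliary point $\hat g_{t'}(X_{t'}^*,y,\varepsilon)$ is correctly identified as a sample from $\hat p_{t'}(\cdot|X_{t'}^*,y)$ so that the optimal coupling to $p_{t'}^*(\cdot|X_{t'}^*,y)$ can be legitimately attached, and that all the conditioning is measurable so that Fubini applies when passing from conditional to unconditional expectations. There is also a minor subtlety in that Assumption~\ref{assump:rml-smooth} gives a Lipschitz bound for fixed $\varepsilon$, so one must use \emph{the same} noise draw on both branches of the coupling — which is exactly what the construction above does. One should also note the mild typo in the statement ($L_{s'}^t$ versus $L_s^t$ in the last term, and the same in the display); the intended quantity in the final term is $L_s^t u_t = \big(\prod_{s'=s+1}^t L_{s'}\big) u_t$, which is what the unrolled recursion produces. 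Everything else is a routine induction using Minkowski's inequality to combine the per-step bounds.
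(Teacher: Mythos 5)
Your proposal is correct and follows essentially the same route as the paper's proof: an optimal coupling of the time-$t'$ marginals, a triangle/Minkowski decomposition into the fresh conditional error (bounded by $\delta_{t'}$ via Assumption~\ref{assump:wasserstein} at $x\sim p_{t'}^*$) plus a shared-noise Lipschitz term (bounded by $L_{t'}u_{t'}$ via Assumption~\ref{assump:rml-smooth}), giving the one-step recursion $u_{t'-1}\le \delta_{t'}+L_{t'}u_{t'}$, which is then unrolled by induction. Your reading of $L_{s'}^t$ in the final term as the typo for $L_s^t$ is also the intended one.
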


The following result is a direct consequence of the lemma.
\begin{theorem}
 Algorithm~\ref{alg:rmg} leads to a sample $\tilde{X}_0$ from $\hat{p}_0$ that satisfies
 \[
 \left(\E_{y \sim p^*} \big[W_2(\hat{p}_0(\cdot|y), p_0^*(\cdot|y))^2\big]\right)^{1/2}
 \leq \sum_{t=0}^{T-1} L_0^t \delta_{t+1} .
 \]
 \label{thm:rmg-error}
\end{theorem}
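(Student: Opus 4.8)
The plan is to derive Theorem~\ref{thm:rmg-error} as a special case of Lemma~\ref{lem:func-compose} by setting $s=0$ and $t=T$. First I would observe that in Algorithm~\ref{alg:rmg}, the initial state $\tilde{X}_T$ is sampled exactly from $q^*_{X|Y=y} = p^*_T(\cdot|y)$, so there is no error at the top of the chain: in the notation of the lemma, $u_T = 0$. Then, applying the full reverse Markov generation $\hat{g}_T, \hat{g}_{T-1}, \ldots, \hat{g}_1$ produces $\tilde{X}_0 \sim \hat{p}_0$, and the lemma with $s=0$, $t=T$ gives
\[
u_0 \leq \sum_{t'=1}^{T} L_0^{t'-1} \delta_{t'} + L_0^T u_T = \sum_{t'=1}^{T} L_0^{t'-1} \delta_{t'},
\]
where $L_0^{t'-1} = \prod_{s'=1}^{t'-1} L_{s'}$ (and $L_0^0 = 1$ by the empty-product convention). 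Reindexing with $t = t'-1$ so that $t'$ ranges over $1,\dots,T$ corresponds to $t$ ranging over $0,\dots,T-1$, this is exactly $\sum_{t=0}^{T-1} L_0^t \delta_{t+1}$, which is the claimed bound once we recall $u_0^2 = \E_{y\sim p^*}[W_2(\hat p_0(\cdot|y), p_0^*(\cdot|y))^2]$.

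The only subtlety I would want to address carefully is the base case $u_T = 0$: this requires that Algorithm~\ref{alg:rmg} draws $\tilde{X}_T$ from the \emph{true} terminal marginal $q^* = p^*_T$, which is indeed how the algorithm is specified (it samples $\tilde X_T \sim q^*_{X|Y=y}$ exactly, since $q^*$ is assumed easy to sample from). Hence $\hat{p}_T(\cdot|y) = p^*_T(\cdot|y)$ and $W_2$ between them is zero. I would also note that the hypotheses of Lemma~\ref{lem:func-compose} are met: $X_t$ in the lemma is taken from $\hat p_t$ with $y\sim p^*$, and here $\hat p_T = p^*_T$, so everything is consistent, and Assumptions~\ref{assump:wasserstein} and~\ref{assump:rml-smooth} supply the $\delta_t$ and $L_t$ used in the lemma.

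There is essentially no obstacle here — the theorem is a corollary, and the real work lives in Lemma~\ref{lem:func-compose} (a one-step Wasserstein contraction/coupling estimate: couple $\tilde X_t$ with a copy drawn from $p^*_t$, push both through the Lipschitz map $\hat g_t$ to control the transport cost by $L_t u_t$, add the one-step estimation error $\delta_t$ from Assumption~\ref{assump:wasserstein} via the triangle inequality for $W_2$, and iterate). If I were asked to be fully self-contained I would instead give a direct induction on $t$ from $T$ down to $0$: define $u_t$ as in the lemma, establish $u_{t-1} \le L_t u_t + \delta_t$ by the coupling argument, unroll the recursion from $u_T = 0$, and collect the geometric-type factors $L_0^t = \prod_{s=1}^{t} L_s$. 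But given that Lemma~\ref{lem:func-compose} is already available, the cleanest presentation is simply to invoke it with $s=0$, $t=T$, substitute $u_T=0$, and reindex the sum.
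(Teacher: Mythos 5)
Your proposal is correct and matches the paper's intended argument: the paper presents Theorem~\ref{thm:rmg-error} as a direct consequence of Lemma~\ref{lem:func-compose}, obtained exactly as you describe by taking $s=0$, $t=T$, using $u_T=0$ (since $\tilde{X}_T$ is drawn exactly from $q^*=p^*_T$), and reindexing the sum. No gaps.
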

This implies that Algorithm~\ref{alg:rmg} leads to a sample $\tilde{X}_0$ with small Wasserstein error with respect to the target distribution if $L_{0}^t$ is small for all $t$. However, due to the product form $L_0^t$ can be potentially large. 
Algorithm~\ref{alg:rmg-alt} can be used to address this issue. In particular, we have
\begin{theorem}\label{thm:alternating}
Assume that $s(t)<t$ for $t \geq 1$, and the induced forward process $\cP'$ in Algorithm~\ref{alg:rmg-alt} from $s(t)$ to $t$ can be achieved using diffusion:
\[
X_t = f_t(X_{s(t)},\epsilon')  + \beta_t \epsilon, \qquad \epsilon \sim N(0,I) , 
\]
where $f_t(x,\epsilon')$ is known $\alpha_t$-Lipschitz function with respect to $x$, and $\epsilon'$ is an arbitrary random variable. $\beta_t>0$ is known. 
Then we have
\[
\E_{y \sim p^*} \KL(p_1^*(\cdot|y)||\hat{p}_1(\cdot|y)) \leq \sum_{t=1}^{T-1} \frac{\alpha_t^2 \left(\sum_{t'=s(t)+1}^{t} L_{s(t)}^{t'} \delta_{t'+1}\right)^2}{2 \beta_t^2}  ,
\]
where $p_1^*$ is the distribution of $X_1|y$ and $\hat{p}_1$ is the distribution of $\tilde{X}_1|y$ from Algorithm~\ref{alg:rmg-alt}. 
\end{theorem}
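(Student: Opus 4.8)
The plan is to view Algorithm~\ref{alg:rmg-alt} as a Markov chain on the path $(\tilde X_T,\tilde X_{T-1},\dots,\tilde X_1)$ and compare it, one outer step at a time, with an idealized reference chain that carries the true forward marginals. For $t\in\{1,\dots,T-1\}$, write $\hat K_t(\cdot\mid x_{t+1},y)$ for the transition that Algorithm~\ref{alg:rmg-alt} applies in going from $\tilde X_{t+1}=x_{t+1}$ to $\tilde X_t$: first push $x_{t+1}$ through the learned generators $\hat g_{t+1},\dots,\hat g_{s(t)+1}$ to obtain a pre-diffusion law $\hat A_t(\cdot\mid x_{t+1},y)$ (the law of $\tilde X_{s(t)}'$), then pass it through the known diffusion channel $x\mapsto f_t(x,\epsilon')+\beta_t\epsilon$. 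Let $\bar K_t$ be defined the same way but with the \emph{true} reverse conditionals $p^*_{t+1}(\cdot\mid\cdot,y),\dots,p^*_{s(t)+1}(\cdot\mid\cdot,y)$ in place of the $\hat g$'s, giving a pre-diffusion law $\bar A_t$, and let $(\bar X_T,\dots,\bar X_1)$ be the chain started at $\bar X_T\sim q^*$ with transitions $\bar K_t$. By the (sub-process form of the) exact-correctness result Theorem~\ref{thm:rmp}, each true reverse step maps the marginal $p^*_{t+1}$ to $p^*_t$, and by Assumption~\ref{assump:sbp-markov} the diffusion step of $\cP'$ maps $p^*_{s(t)}$ back to $p^*_t$; hence $\bar X_t\sim p^*_t$ for every $t$, and in particular $\bar X_1\sim p^*_1$.

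Both chains start from the common law $q^*$ at time $T$, so the data-processing inequality followed by the Markov chain rule for KL gives, conditionally on $y$,
\begin{equation*}
\KL\!\big(p^*_1(\cdot\mid y)\,\big\|\,\hat p_1(\cdot\mid y)\big)\;\le\;\KL\!\big(\mathrm{law}(\bar X_{T:1}\mid y)\,\big\|\,\mathrm{law}(\tilde X_{T:1}\mid y)\big)\;=\;\sum_{t=1}^{T-1}\E_{x_{t+1}\sim p^*_{t+1}(\cdot\mid y)}\!\Big[\KL\!\big(\bar K_t(\cdot\mid x_{t+1},y)\,\big\|\,\hat K_t(\cdot\mid x_{t+1},y)\big)\Big].
\end{equation*}
Averaging over $y\sim p^*$ reduces the theorem to a per-step bound in which the input $x_{t+1}$ is drawn from the \emph{correct} marginal $p^*_{t+1}$ — the only regime in which Assumptions~\ref{assump:wasserstein} and~\ref{assump:rml-smooth} are phrased.

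For fixed $(x_{t+1},y)$, both $\bar K_t$ and $\hat K_t$ push their pre-diffusion laws through the \emph{same} channel $x\mapsto f_t(x,\epsilon')+\beta_t\epsilon$, whose first argument is $\alpha_t$-Lipschitz. Taking an optimal $W_2$-coupling of $\bar A_t(\cdot\mid x_{t+1},y)$ and $\hat A_t(\cdot\mid x_{t+1},y)$, coupling the noises $(\epsilon',\epsilon)$ identically, and using joint convexity of $\KL$ together with $\KL\big(\mathcal N(a,\beta_t^2 I)\,\big\|\,\mathcal N(b,\beta_t^2 I)\big)=\|a-b\|_2^2/(2\beta_t^2)$, one obtains $\KL(\bar K_t(\cdot\mid x_{t+1},y)\,\|\,\hat K_t(\cdot\mid x_{t+1},y))\le \tfrac{\alpha_t^2}{2\beta_t^2}\,W_2\big(\bar A_t(\cdot\mid x_{t+1},y),\hat A_t(\cdot\mid x_{t+1},y)\big)^2$. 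It remains to bound $\E_y\E_{x_{t+1}\sim p^*_{t+1}}\big[W_2(\bar A_t,\hat A_t)^2\big]$, which I would do exactly as in the proof of Lemma~\ref{lem:func-compose}: unfold the generators one step at a time from time $t+1$ down to $s(t)$; at each step the triangle inequality for $W_2$ splits the discrepancy into (i) the $L_{r+1}$-Lipschitz image (Assumption~\ref{assump:rml-smooth}) of the previous-step discrepancy, and (ii) the one-step mismatch between $\hat g_{r+1}$ and $p^*_{r+1}(\cdot\mid\cdot,y)$ averaged over the current law of $X_{r+1}$ \emph{along the true reverse chain}. Because $x_{t+1}\sim p^*_{t+1}$, Theorem~\ref{thm:rmp} guarantees that this current law is exactly $p^*_{r+1}$, so term (ii) is at most $\delta_{r+1}$ by Assumption~\ref{assump:wasserstein}. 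This yields the recursion $V_r\le L_{r+1}V_{r+1}+\delta_{r+1}$ with $V_{t+1}=0$ for the root-mean-square error $V_r$ at time $r$; unrolling down to $r=s(t)$ and using $L_{s(t)}^{t'}=\prod_{s'=s(t)+1}^{t'}L_{s'}$ recovers the inner sum $\sum_{t'=s(t)+1}^{t}L_{s(t)}^{t'}\delta_{t'+1}$ of the statement (up to the boundary term $\delta_{s(t)+1}$ from the last applied generator $\hat g_{s(t)+1}$, which amounts to whether the sum's lower limit is $s(t)$ or $s(t)+1$), and substituting back completes the proof.

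I expect the last step to be the main obstacle: one must track \emph{conditional} Wasserstein discrepancies — the learned and idealized chains are launched from a common point $x_{t+1}$ but evolve under different kernels — and justify the averaging identities needed to apply Assumption~\ref{assump:wasserstein} (whose inputs are from the true marginal $p^*_t$) at every intermediate time; this is precisely where Theorem~\ref{thm:rmp} does the heavy lifting, certifying that the idealized reverse chain stays on the correct forward marginals throughout. Two further points need care: making the Markov-chain KL chain rule rigorous when the $X_t$ may live in spaces of different dimensions along the path, and keeping the direction of the $\KL$ straight (the reference chain, which carries the true marginals, must occupy the first argument), since once $f_t$ is only assumed Lipschitz the Gaussian-channel bound is one-sided.
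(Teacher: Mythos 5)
Your proposal follows essentially the same route as the paper's own proof: decompose $\KL(p_1^*\|\hat p_1)$ via data processing and the chain rule over the path of the reverse chain, bound each per-step KL by pushing the pre-diffusion laws through the common Gaussian channel (using the $\alpha_t$-Lipschitz property of $f_t$ and the Gaussian KL formula), and control the resulting $W_2$ discrepancy between the learned and idealized sub-chains by the same unfolding argument as Lemma~\ref{lem:func-compose}. Your explicit use of an optimal coupling and joint convexity of KL in the channel step is a welcome elaboration of a point the paper states only tersely, but it is not a different argument.
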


In particular, if we take $s(t)=t-1$, then 
\[
\E_{y \sim p_*} \KL(p_1^*(\cdot|y), \hat{p}_1(\cdot|y)) \leq \sum_{t=1}^{T-1} \frac{\alpha_t^2 \left(\delta_{t}+ L_t \delta_{t+1}\right)^2}{2 \beta_t^2}  .
\]
This implies that by the alternating scheme, we can avoid multiplicative of Lipschitz constants $L_t$ as in Theorem~\ref{thm:rmg-error}. 
We also note that the result can bound the KL-divergence between $\hat{p}_1$ and $p_1^*$, but not $\hat{p}_0$ and $p_0^*$. This is because Smoothing with nonzero $\beta_t$ allows KL-divergence to be well-defined. Without adding noise, 
the KL divergence $\hat{p}_0$ and $p_0^*$ may not be well defined. Nevertheless, we can use a technique similar to 
Theorem~\ref{thm:rmg-error} to bound the last step error from $t=1$ to $t=0$.

\section{Statistical Analysis for Specific Examples}
\label{sec:analysis_specific}
In this section, we analyze two specific examples. Section~\ref{sec:linsem} focuses on estimating the adjacency matrix in linear structural equation models, which are widely studied in the causal inference literature \cite[e.g.,][]{shimizu2006linear}. We compare the RML estimator with one-step engression and maximum likelihood estimation in terms of estimation efficiency. Motivated by these results, we also explore an RML-enhanced statistical distance aimed at improving the power of hypothesis tests. In Section~\ref{sec:forward_analysis}, we examine different forward processes in a Gaussian mixture model and show how their choice can affect learning through the smoothness of the induced reverse conditionals, underscoring RML’s advantage in allowing flexible forward-process design.

\subsection{Benefits of RML in Linear Structural Equations}\label{sec:linsem}
We consider a linear structural equation model 
\begin{equation}\label{eq:linscm}
	X=B^*X+\varepsilon
\end{equation}
where $B^*\in\mathbb{R}^{d\times d}$ is a strictly lower triangular matrix and $\varepsilon\sim\cN(0,I_d)$. The goal is to estimate the adjacency matrix $B^*$. We compare the statistical property of two estimators---(one-step) engression and RML---against the optimal maximum likelihood estimation. 

For engression, the one-step generative model is of the form $g(\varepsilon)=(I-B)^{-1}\varepsilon$, where $B\in\mathbb{R}^{d\times d}$ is a strictly lower triangular matrix. The engression loss, in this case, is given by $$\ell_{\mathrm{eng}}(B;x)=\E\|x - (I-B)^{-1}\varepsilon\|_2-\frac{1}{2}\E\|(I-B)^{-1}(\varepsilon-\varepsilon')\|_2.$$ Given an iid sample $X^1,\dots,X^n$ generated from model \eqref{eq:linscm}, the engression estimator for $B^*$ is defined as
\begin{equation*}
	\hat{B}_{\mathrm{eng}} = \argmin_{B}\sum_{i=1}^n \ell_{\mathrm{eng}}(B;X^i).
\end{equation*}

For RML, we define the forward process by dropping one dimension per step, i.e.,
\begin{equation}\label{eq:sem_forward}
\begin{tabular}{ccccc}
	$t=0$ & $t=1$ & $\dots$ & $t=d-1$ & $t=d$\\
	$\begin{pmatrix}
		X_1 \\ \vdots \\ X_{d-1} \\ X_d
	\end{pmatrix}$ &
	$\begin{pmatrix}
		X_1 \\ \vdots \\ X_{d-1} 
	\end{pmatrix}$ & $\dots$ &
	$X_1$ &
	$\varepsilon$
\end{tabular}
\end{equation}
Denote ${b^*_k}^\top=B^*_{k,1:(k-1)}$ the first $k-1$ elements of the $k$-th row of $B$. We note that
\begin{equation*}
	X_k|X_1,\dots,X_{k-1}\sim\cN({b^*_k}^\top X_{1:(k-1)}, 1).
\end{equation*}
For each $k=2,\dots,d$, consider generative model $g_k(x_1,\dots,x_{k-1},\varepsilon_k)=b_k^\top x_{1:(k-1)}+\varepsilon_k$, where $b_k\in\mathbb{R}^{k-1}$ is the model parameter and $\varepsilon_k\sim\cN(0,1)$. 
Consider the loss function
\begin{equation*}
	\ell_k(b_k;x_1,\dots,x_k) = \E\|x_k-(b_k^\top x_{1:(k-1)}+\varepsilon_k)\|_2.
\end{equation*}
The estimator for step $d-k+1$ is defined by
\begin{equation*}
	\hat{b}_k = \argmin_{b_k} \sum_{i=1}^n\ell_t(b_k;X^i_1,\dots,X^i_k).
\end{equation*}
To pile up the estimators at all steps, note that each of them estimates a different row of $B^*$ after being augmented with 0, i.e., denoting $\mathbf{\hat{b}_k}:=(\hat{b}_k^\top,\mathbf{0}_{d-k}^\top)^\top\in\mathbb{R}^d$. Then the RML estimator of $B^*$ is defined as
\begin{equation*}
	\hat{B}_{\mathrm{RML}} = 
	\begin{pmatrix}
		\mathbf{0} & \mathbf{\hat{b}_2} & \dots & \mathbf{\hat{b}_d}
	\end{pmatrix}^\top.
\end{equation*}

\medskip
\noindent\textbf{Estimation efficiency.}\quad
The following results reflect the asymptotic efficiency of RML and engression compared to that of MLE. Note that for engression, the analytical form is too complicated, so we only compute it approximately for $d=2$. Later we provide some numerical results for $d>2$.
\begin{theorem}\label{thm:asy_var}
	All the estimators are consistent. The trace of the asymptotic variance of $\mathrm{vec}(\hat{B}_{\mathrm{RML}})$ is 
\begin{equation*}
	\frac{\pi}{3}\left(\frac{d(d-1)}{2}+\sum_{i=2}^{d-1}\sum_{j=1}^{i-1}(d-i){B_{ij}^*}^2\right).
\end{equation*}
The trace of the asymptotic variance of the MLE for $B^*$ is 
\begin{equation*}
	\frac{d(d-1)}{2}+\sum_{i=2}^{d-1}\sum_{j=1}^{i-1}(d-i){B_{ij}^*}^2.
\end{equation*}
The asymptotic variance of engression for $d=2$ is approximately equal to $1.2386$.
\end{theorem}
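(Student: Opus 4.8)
The plan is to treat all three estimators as M-estimators — the MLE being the special case with the log-likelihood as loss — and to invoke the standard asymptotic theory: consistency from uniqueness of the population minimizer, and the sandwich formula, under which $\sqrt{n}(\hat\theta-\theta^*)$ converges in distribution to $\cN(0,H^{-1}VH^{-1})$ with $H$ the population Hessian at $\theta^*$ and $V$ the covariance of the per-sample gradient at $\theta^*$. Throughout I would exploit the recursive (triangular) structure of the SEM: writing $X=(I-B^*)^{-1}\varepsilon$, for each $k$ we have $X_k=(b_k^*)^\top X_{1:(k-1)}+\eta_k$ with $\eta_k\sim\cN(0,1)$ \emph{independent} of $X_{1:(k-1)}$, and $\Sigma_{k-1}:=\E[X_{1:(k-1)}X_{1:(k-1)}^\top]$ admits the Cholesky-type identity $\Sigma_{k-1}^{-1}=(I-B^*_{\le k-1})^\top(I-B^*_{\le k-1})$, where $B^*_{\le k-1}$ is the leading $(k-1)\times(k-1)$ block of $B^*$ (itself strictly lower triangular). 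For \emph{consistency}: the MLE factorizes into independent, strictly convex Gaussian linear regressions of $X_k$ on $X_{1:(k-1)}$; for RML, I will show below that the population loss for each row is convex with unique minimizer $b_k^*$; for engression, strict properness of the energy score forces the population minimizer to satisfy $g_B(\varepsilon)\stackrel{d}{=}X$, hence $\mathrm{Cov}(g_B(\varepsilon))=\mathrm{Cov}(X)$, and uniqueness of the Cholesky factorization among lower-triangular unit-diagonal matrices $I-B$ gives $B=B^*$.

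For the \emph{MLE}: the score of the $k$-th conditional factor at $b_k^*$ is $\eta_k X_{1:(k-1)}$, whose covariance — and the $k$-th Fisher-information block — is $\E[\eta_k^2]\Sigma_{k-1}=\Sigma_{k-1}$; the Fisher information is block-diagonal across $k$, so $\hat b_k^{\mathrm{MLE}}$ has asymptotic covariance $\Sigma_{k-1}^{-1}$, and the trace of the asymptotic covariance of $\mathrm{vec}(\hat B_{\mathrm{MLE}})$ equals $\sum_{k=2}^d\operatorname{tr}(\Sigma_{k-1}^{-1})$ (cross-row covariances are irrelevant to the trace). By the Cholesky identity, $\operatorname{tr}(\Sigma_{k-1}^{-1})=\|I-B^*_{\le k-1}\|_F^2=(k-1)+\sum_{1\le j<i\le k-1}(B^*_{ij})^2$. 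Summing over $k=2,\dots,d$ gives $\frac{d(d-1)}{2}$ from the first terms; a fixed pair $(i,j)$ with $j<i$ enters the $k$-th summand exactly when $k\ge i+1$, i.e.\ for $d-i$ values of $k$, which yields $\sum_{i=2}^{d-1}\sum_{j=1}^{i-1}(d-i)(B^*_{ij})^2$ from the second terms — matching the stated expression.

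For \emph{RML}: the per-sample objective for row $k$ is $m(b_k;x_{1:k})=\phi\big(x_k-b_k^\top x_{1:(k-1)}\big)$ with $\phi(u):=\E_{\varepsilon\sim\cN(0,1)}|u-\varepsilon|$; note the subtracted term $\tfrac12\E\|g_k(x,y,\varepsilon)-g_k(x,y,\varepsilon')\|_2$ of the general engression loss is constant in $b_k$ here because the noise enters $g_k$ additively, so dropping it is harmless. Gaussian smoothing makes $\phi$ smooth with $\phi'(u)=2\Phi(u)-1$ and $\phi''(u)=2\varphi(u)>0$, so $m$ is convex in $b_k$ and the population loss $L_k(b_k)=\E\,\phi\big((b_k^*-b_k)^\top X_{1:(k-1)}+\eta_k\big)$ is strictly minimized at $b_k^*$, since $\E|W+Z|>\E|Z|$ for any non-degenerate $W$ independent of a symmetric $Z$ (here $Z\sim\cN(0,2)$), and $W=(b_k^*-b_k)^\top X_{1:(k-1)}$ is degenerate only if $b_k=b_k^*$ because $\Sigma_{k-1}\succ0$. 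For the sandwich, using $\eta_k\perp X_{1:(k-1)}$ one gets $V_k=\E[\phi'(\eta_k)^2]\,\Sigma_{k-1}$ and $H_k=\nabla^2 L_k(b_k^*)=\E[\phi''(\eta_k)]\,\Sigma_{k-1}$. The two scalar constants evaluate cleanly: $\E[\phi'(\eta_k)^2]=\E[(2\Phi(\eta_k)-1)^2]=\int_0^1(2u-1)^2\,du=\tfrac13$ since $\Phi(\eta_k)\sim\mathrm{Unif}(0,1)$, and $\E[\phi''(\eta_k)]=2\,\E[\varphi(\eta_k)]=2\int\varphi(u)^2\,du=\pi^{-1/2}$. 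Hence $H_k^{-1}V_kH_k^{-1}=\pi^{1/2}\Sigma_{k-1}^{-1}\cdot\tfrac13\Sigma_{k-1}\cdot\pi^{1/2}\Sigma_{k-1}^{-1}=\tfrac{\pi}{3}\Sigma_{k-1}^{-1}$, so the trace of the asymptotic covariance of $\mathrm{vec}(\hat B_{\mathrm{RML}})$ equals $\tfrac{\pi}{3}\sum_{k=2}^d\operatorname{tr}(\Sigma_{k-1}^{-1})$, exactly $\tfrac{\pi}{3}$ times the MLE value, which is the claimed formula.

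For \emph{engression with $d=2$}: the only free entry is $b:=B^*_{21}$, the model reduces to the scalar family $g_\beta(\varepsilon)=(\varepsilon_1,\beta\varepsilon_1+\varepsilon_2)$, and the population minimizer is $\beta=b$ by the properness argument above. The asymptotic variance of $\hat\beta$ is the scalar $V/H^2$ with $H=\partial_\beta^2\,\E_X\ell_{\mathrm{eng}}(b;X)$ and $V=\operatorname{Var}_X(\partial_\beta\ell_{\mathrm{eng}}(b;X))$; here the expectations over the model noise $\varepsilon,\varepsilon'$ again smooth out the kink of the Euclidean norm, so $H>0$ and $V$ are well-defined, but — unlike the MLE and RML cases — the norm couples the two coordinates and the resulting Gaussian integrals of $\sqrt{\cdot}$ and its derivatives have no closed form, so I would evaluate $H$ and $V$ numerically, obtaining $V/H^2\approx1.2386$. \emph{The main obstacle} is the RML step: rigorously, one must justify exchanging differentiation and expectation and supply the dominating envelopes / stochastic-equicontinuity conditions that make the M-estimator CLT applicable to the nonlinear, Gaussian-smoothed loss (the convexity of $m$ in $b_k$ helps here), after which the computation collapses to the two elementary constants above — of which $\E[(2\Phi(\eta)-1)^2]=1/3$ via $\Phi(\eta)\sim\mathrm{Unif}(0,1)$ is the clean identity producing the factor $\pi/3$. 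For engression the only real work is the numerical evaluation, since no closed form is available; one should still verify $H>0$ so the sandwich formula applies.
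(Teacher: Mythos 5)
Your proposal is correct and follows essentially the same route as the paper: the M-estimator sandwich formula applied row by row using the triangular SEM structure, the two constants $\E[(2\Phi(\eta)-1)^2]=\tfrac13$ and $\E[2\varphi(\eta)]=\pi^{-1/2}$ yielding the per-block covariance $\tfrac{\pi}{3}\Sigma_{k-1}^{-1}$, the Cholesky-type identity for $\operatorname{tr}(\Sigma_{k-1}^{-1})$, and a partly numerical evaluation for engression at $d=2$. The only minor divergence is that the paper evaluates the engression Hessian in closed form as $\sqrt{\pi}/32$ and resorts to numerics only for the gradient variance, whereas you treat both numerically; this does not affect the result, and your filled-in consistency arguments and trace bookkeeping are if anything more explicit than the paper's.
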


There are two main implications. First, in this case, the RML estimator is only slightly (by a factor of $\pi/3$) less efficient than MLE for any $d$, suggesting that RML, despite being a more flexible generative approach, retains most of the statistical efficiency.  Second, when $d=2$, the RML estimator has an asymptotic variance of $\pi/3\approx1.0472$ which is lower than that of engression.  

For higher dimensions, we run simulations using gradient descent with random initialization for both engression and RML, and the closed-form solution for MLE. We randomly generate a true adjacency matrix $B^*$ and, for each chosen dimension and sample size, repeatedly simulate data and estimate the parameters over 100 replications. We evaluate performance using the sum of squared biases and the sum of variances over all $d(d-1)/2$ parameters.

Table~\ref{tab:sem} reports the biases and variances, and Figure~\ref{fig:sem_ratio} plots the ratio of total variance for RML or engression relative to MLE, as a measure of relative efficiency. All methods are consistent, but RML nearly matches the efficiency of MLE---with variance ratios close to $\pi/3$, in agreement with theory---and shows much lower bias and variance than engression, particularly for large $d$. These results reinforce the benefits of multi-step RML over one-step engression in statistical efficiency.

\begin{table}
\centering
\caption{Biases and variances for Gaussian adjacency matrix estimation.}\label{tab:sem}
\smallskip
\begin{tabular}{cc*{3}{cc}}
\toprule
\multicolumn{2}{c}{} &
\multicolumn{2}{c}{$d=5$} &
\multicolumn{2}{c}{$d=10$} &
\multicolumn{2}{c}{$d=20$} \\
$n$ & method & bias$^2$ & variance & bias$^2$ & variance & bias$^2$ & variance \\
\midrule
\multirow{3}{*}{100}
  & engression & 0.0793 & 1.9641 & 8.7834 & 23.7303 & 66.0000 & 164.940 \\
  & RML        & 0.0009 & 0.1304 & 0.0082 & 0.8465 & 0.0576  & 6.7168 \\
  & MLE        & 0.0010 & 0.1282 & 0.0084 & 0.8166 & 0.0537  & 6.4606 \\
\cmidrule{1-8}
\multirow{3}{*}{1000}
  & engression & 0.1387 & 2.1417 & 8.3178 & 22.2217 & 62.2733 & 164.000 \\
  & RML        & 0.0002 & 0.0128 & 0.0005 & 0.0787 & 0.0064  & 0.5771 \\
  & MLE        & 0.0002 & 0.0120 & 0.0005 & 0.0749 & 0.0049  & 0.5586 \\
\cmidrule{1-8}
\multirow{3}{*}{10000}
  & engression & 0.0967 & 2.2687 & 7.4485 & 23.4347 & 65.2973 & 164.451 \\
  & RML        & 0.0000 & 0.0013 & 0.0001 & 0.0081 & 0.0005  & 0.0576 \\
  & MLE        & 0.0000 & 0.0013 & 0.0000 & 0.0080 & 0.0004  & 0.0554 \\
\bottomrule
\end{tabular}
\end{table}

\begin{figure}
\centering
\begin{tabular}{ccc}
	\includegraphics[width=0.3\textwidth]{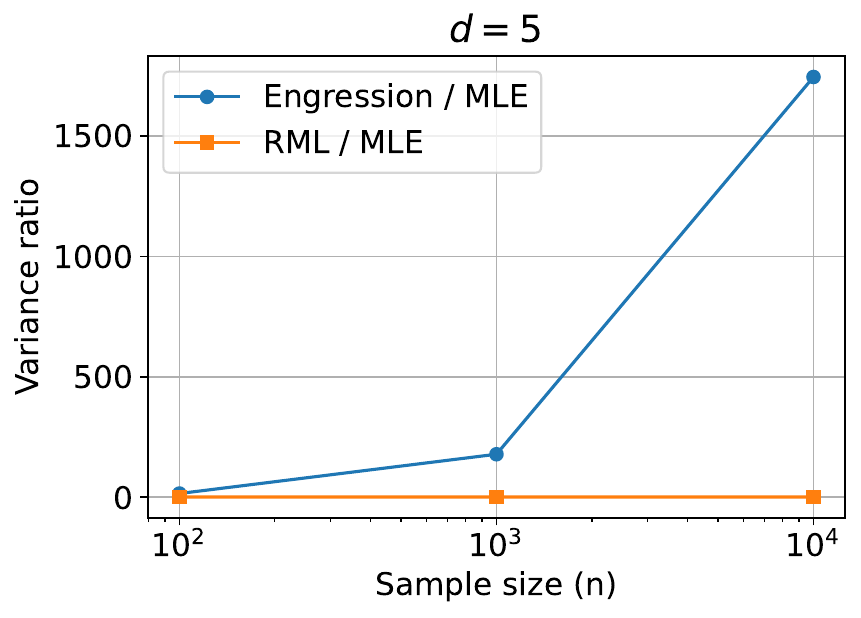} &
	\includegraphics[width=0.3\textwidth]{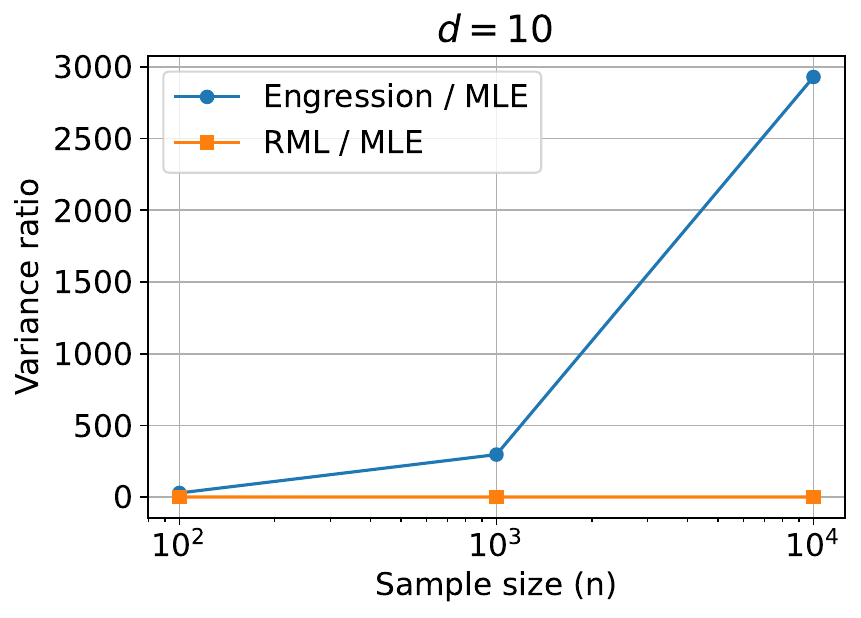} &
	\includegraphics[width=0.3\textwidth]{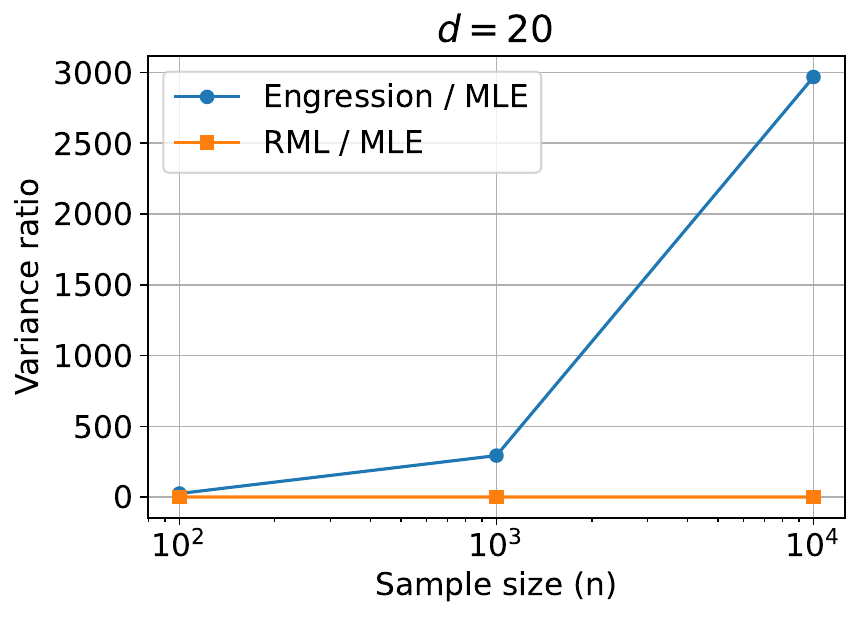} 
\end{tabular}\vspace{-0.1in}
\caption{Ratio of variances for Gaussian adjacency matrix estimation.}
\label{fig:sem_ratio}
\end{figure}

\medskip
\noindent\textbf{Hypothesis testing.}\quad
Inspired by the gain of RML in estimation efficiency, we explore enhancing a statistical distance with multiple conditional distances to improve the power in two-sample hypothesis testing. Consider two probability distributions $X\sim p$ and $W\sim q$ on $\mathbb{R}^d$, and the null hypothesis
 \[ H_0: p=q. \]
 A natural test statistic is based on the energy distance $D(p,q)$ defined in \eqref{eq:energy_distance}. To enhance it in an RML-fashion, we define forward processes for both distributions, i.e., $\{X_t\}$ and $\{W_t\}$, $t=0,1,\dots,T$, where $X_0\sim p$, $W_0\sim q$, and $X_T$ and $W_T$ follow a known distribution. Let $p_t(\cdot | x_t)$ denote the conditional distribution of $X_{t-1} | X_t$ and $q_t(\cdot | w_t)$ that of $W_{t-1} | W_t$. The RML-enhanced energy distance is defined as
 \begin{equation}\label{eq:rml_energy_distance}
 	\frac1T\sum_{t=1}^T \E\big[D(p_t(\cdot|X_t), q_t(\cdot|X_t))\big].
 \end{equation}
 
We provide an illustrative example in the linear structural equation case while we encourage a more generic study in the future. We consider distribution $p$ induced by model \eqref{eq:linscm} with adjacency matrix $A$ and $q$ induced by the same model with adjacency matrix $B$. For the RML-enhanced statistic, we use the forward process in \eqref{eq:sem_forward}. We consider $d=5$ and $n=200$ with $A$ and $B$ randomly generated. To maintain a $95\%$ significance level, we take the $95\%$ quantile of the test statistic under the null as the critical value, which is estimated from 1k simulated null datasets. We repeat the two testing procedures for 1k times on randomly simulated data.

Figure~\ref{fig:test} presents the test statistics based on the energy distance and the RML-enhanced version, along with the resulting power for varying Frobenius norms of $A-B$, which measure the closeness of $p$ and $q$. We observe that when the two distributions are fairly close, the energy distance-based test statistic shows almost no power, whereas the RML-enhanced statistic achieves substantial power as long as the distance exceeds a small threshold ($>0.01$). 
These results suggest the potential of the multi-step enhancement of RML not only for improving estimation efficiency but also for more powerful hypothesis tests. 
 
\begin{figure}
\centering
\begin{tabular}{@{}cc}
	\includegraphics[height=0.35\textwidth]{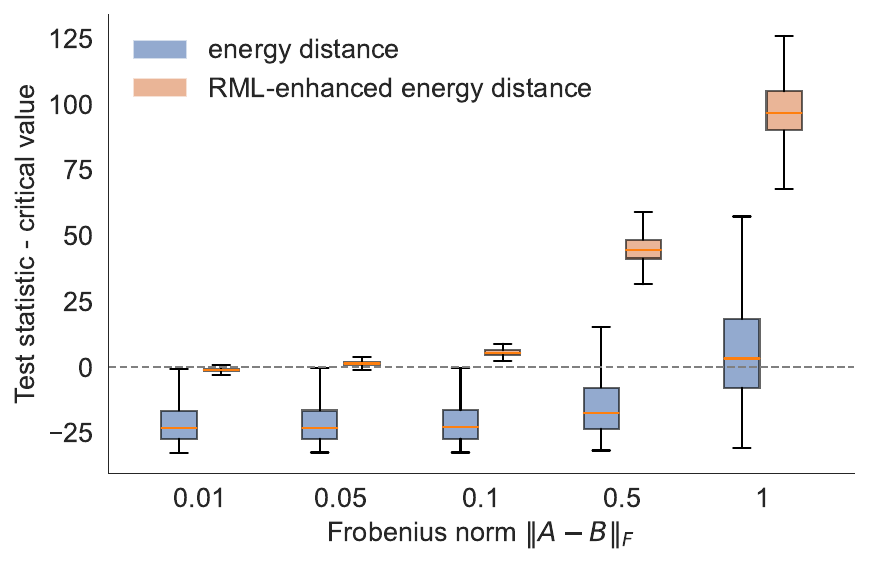} &
	\includegraphics[height=0.35\textwidth]{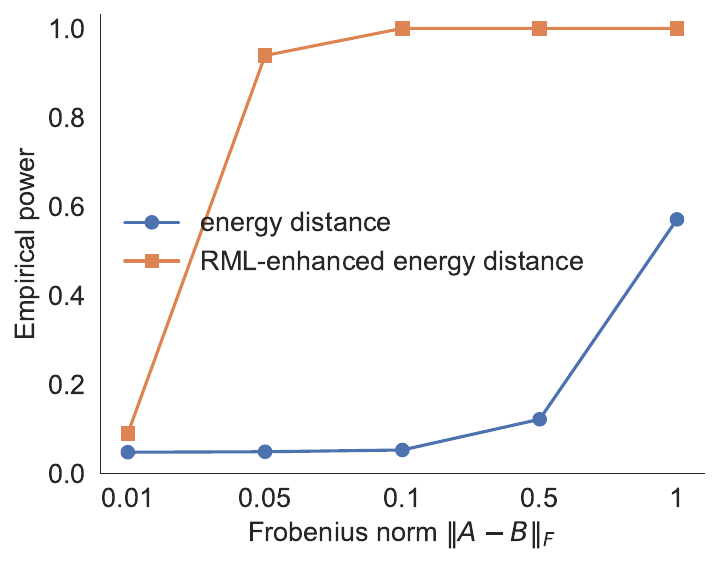} 
\end{tabular}\vspace{-0.1in}
\caption{Test statistics and power for two-sample testing.}
\label{fig:test}
\end{figure}

\subsection{Comparing Forward Schemes in Gaussian Mixture Model}\label{sec:forward_analysis}

In order to compare different forward schemes, we consider a simple Gaussian mixture model in 1D, as illustrated in Figure~\ref{fig:gmm-example}. 
In this example, we let \(s\in\{-1,1\}\) with \(\Pr(s=\pm1)=\frac12\), and
\[
X_{0}\mid s \;\sim\;\mathcal{N}\bigl(s,\sigma^2\bigr).
\]

\begin{figure}
\centering
\begin{tikzpicture}
  \begin{axis}[
      width=10cm, height=6cm,
      domain=-3:3, samples=250,
      xlabel={$x$}, ylabel={$p_{X_0}(x)$},
      axis lines=left,
      ymin=0, enlargelimits=false,
      tick label style={font=\small},
      every axis label/.style={font=\small},
      legend style={legend cell align=left, 
      legend pos=north east, xshift=1cm}
    ]
    
    \addplot+[smooth,black,mark=none]
      {0.5/(sqrt(2*pi)*\gmmsigma)*exp(-((x + 1)^2)/(2*\gmmsigma*\gmmsigma))
       +0.5/(sqrt(2*pi)*\gmmsigma)*exp(-((x - 1)^2)/(2*\gmmsigma*\gmmsigma))};

  \end{axis}
\end{tikzpicture}\vspace{-0.1in}
\caption{Simple 1-Dimensional GMM Example with $\sigma=\gmmsigma$.}
\label{fig:gmm-example}
\end{figure}

For \(t\in[0,1]\) we consider the following three cases for the forward process:
\[ X_t = (1- \frac t T) X_0 +  \varepsilon_t \qquad \mbox{ for  } t=0,\ldots, T\]
where 
\begin{align*}
\mbox{flow matching:} \quad & \varepsilon_t = \frac t T \eta_T \mbox{  and  }  \eta_T \sim\mathcal{N}(0, 1)  \\
\mbox{diffusion:} \quad & \varepsilon_0=0,\quad \varepsilon_t = \varepsilon_{t-1} + \frac{\sqrt{2t-1}}{T}\eta_t , \mbox{ and  } \eta_t \stackrel{iid} {\sim} \mathcal{N}(0,  1)\mbox{ for } t=1,\ldots, T \\
\mbox{X process:} \quad & \varepsilon_t =  \frac{t}{T}
\eta_t \mbox{ and  }  \eta_t\stackrel{iid} {\sim} \mathcal{N}(0, 1) \mbox{ for } t=1,\ldots, T 
\end{align*}

The choice makes sure that marginals of $X_t$ for each $t$ are identical.
It is easy to check that the conditional distribution of the reverse process is a mixture of Gaussian:
\[
X_{t-1} \mid X_t = x \sim \sum_{s \in \{-1, 1\}} w_s(x)\, \mathcal{N}\left( a x + b_s,\, \tau^2 \right), 
\]
where for $s = \pm 1$,
\[
w_s(x)
= \frac{1}{
1 + \exp\left(
- \dfrac{2 s x (1 - \frac{t}{T})}{
(1 - \frac{t}{T})^2 \sigma^2 + \left( \frac{t}{T} \right)^2
}
\right)
}, \text{ and }
b_s = s \left(1 - \frac{t-1}{T} - a \left(1 - \frac{t}{T} \right) \right).
\]
Moreover $a$ and $\tau^2$ are different for different methods:
\begin{align*}
\mbox{flow matching:} \quad &   
\begin{cases}
a = \frac{
(1 - \frac{t-1}{T})(1 - \frac{t}{T}) \sigma^2 + \frac{(t-1)t}{T^2}
}{
(1 - \frac{t}{T})^2 \sigma^2 + \frac{t^2}{T^2}
} \\
\tau^2 =
\left(1 - \frac{t-1}{T}\right)^2 \sigma^2 + \frac{(t-1)^2}{T^2}
-
\frac{
\left(
(1 - \frac{t-1}{T})(1 - \frac{t}{T}) \sigma^2 + \frac{(t-1)t}{T^2}
\right)^2
}{
(1 - \frac{t}{T})^2 \sigma^2 + \frac{t^2}{T^2}
}.
\end{cases}
\\
\mbox{diffusion:} \quad & 
\begin{cases}
a = \frac{
\left(1 - \frac{t-1}{T} \right) \left(1 - \frac{t}{T} \right) \sigma^2 + \frac{(t-1)^2}{T^2}
}{
\left(1 - \frac{t}{T} \right)^2 \sigma^2 + \frac{t^2}{T^2}
}\\
\tau^2 =
\left(1 - \frac{t-1}{T} \right)^2 \sigma^2 + \frac{(t-1)^2}{T^2}
-
\frac{
\left(
\left(1 - \frac{t-1}{T} \right)\left(1 - \frac{t}{T} \right) \sigma^2 + \frac{(t-1)^2}{T^2}
\right)^2
}{
\left(1 - \frac{t}{T} \right)^2 \sigma^2 + \frac{t^2}{T^2}
}.
\end{cases}\\
\mbox{X process:} \quad &  
\begin{cases}
a = \frac{
\left(1 - \frac{t-1}{T} \right)\left(1 - \frac{t}{T} \right) \sigma^2
}{
\left(1 - \frac{t}{T} \right)^2 \sigma^2 + \left( \frac{t}{T} \right)^2
}\\
\tau^2 =
\left(1 - \frac{t-1}{T} \right)^2 \sigma^2 + \left( \frac{t-1}{T} \right)^2
-
\frac{
\left(
\left(1 - \frac{t-1}{T} \right)\left(1 - \frac{t}{T} \right) \sigma^2
\right)^2
}{
\left(1 - \frac{t}{T} \right)^2 \sigma^2 + \left( \frac{t}{T} \right)^2
}.
\end{cases}
\end{align*}

\begin{figure}[ht]
\centering
 \input{parts/gmm-example-reverse.tex}\vspace{-0.1in}
  \caption{Conditional densities \(p_{X_{t-1}\mid X_t=\gmmx}(y)\)  
           (\(\sigma=\gmmsigma,\; t=\gmmt,\; T=\gmmT\)).}
  \label{fig:conditional-densities}
\end{figure}

We are particularly interested in the case of $\sigma \to 0$,
because this is the case that the original distribution is difficult to generate due to separation of the two modalities.
In this case, we have 
\begin{align*}
\mbox{flow matching:} \quad & a \to \frac{t-1}{t},\quad \tau^2 \to 0\\
\mbox{diffusion:} \quad & a \to \frac{(t-1)^2}{t^2},\quad \tau^2 \to \frac{(2t-1)}{t^2}\frac{(t-1)^2}{ T^2} \\
\mbox{X process:} \quad & a \to 0,\quad \tau^2 \to \frac{(t-1)^2}{T^2} .
\end{align*}

We note that the smaller $a$ is, and the larger $\tau^2$ is, the less sensitive the conditional distribution is with respect to $x$ and $y$. Based on the computation, it follows that as $\sigma \to 0$, the reverse conditional probability of the $X$ process is smoother than that of the diffusion process, than that of the flow matching process. See Figure~\ref{fig:conditional-densities} for an illustration. 

To further illustrate this in the context of RML training, we train RML models with three different forward processes for the Gaussian mixture example in Figure~\ref{fig:mog}. All previous numerical results were obtained with the X process. Figure~\ref{fig:mog_forward_process} shows generated samples with 10 time steps. RML with the X process produces samples that best match the true distribution, while the other two processes tend to place samples in low-density regions.

 These results highlight that the choice of forward process is critical in multi-step generative modeling. In particular, the X process can facilitate learning by inducing smoother reverse conditionals. The flexibility of RML to accommodate a broad range of forward processes is therefore a key advantage over diffusion models which allow only limited forward-process choices.

 \begin{figure}
 \centering
 \begin{tabular}{ccc}
 	\includegraphics[width=0.25\textwidth]{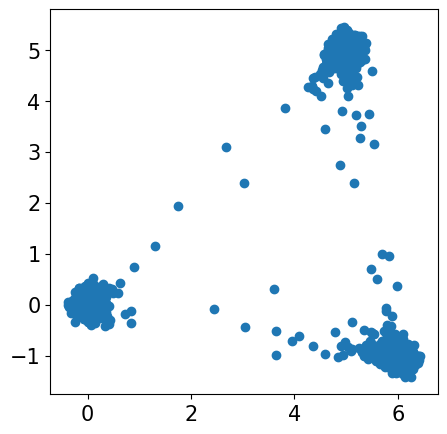} &
 	\includegraphics[width=0.25\textwidth]{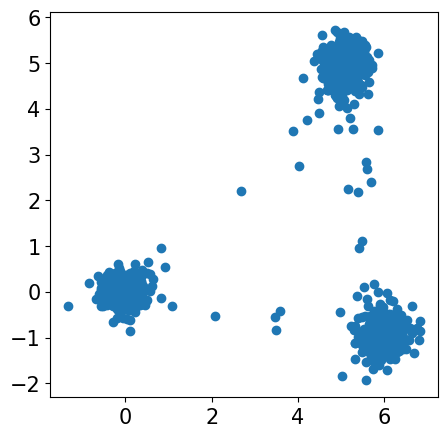} &
 	\includegraphics[width=0.25\textwidth]{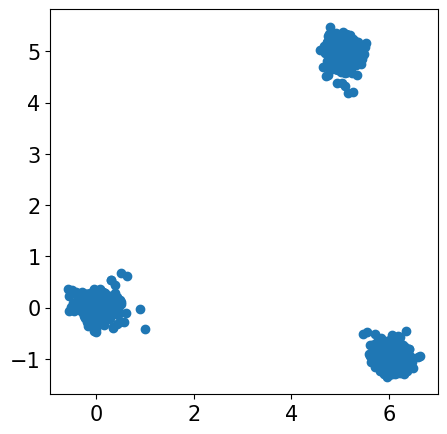}\\
 	\small flow matching & \small diffusion & \small X process
 \end{tabular}
 \caption{Samples generated from RML with different forward processes for the example in Figure~\ref{fig:mog}.}
 \label{fig:mog_forward_process}
 \end{figure}

\section{Application to Regional Precipitation Prediction}\label{sec:precip}
We consider the problem of climate data prediction for monthly precipitations in central Europe at a spatial resolution of $128\times128$. Our goal is to model the full distribution of precipitation, not just its mean or median, because extreme rainfalls can cause severe floods or droughts and widespread damage. Note that to predict regional climate, in practice, one typically has extra information, such as coarse-scale climate, that provides more signal. Here, however, as a proof of concept, we address the more challenging unconditional setting without covariates, in which the target distribution has higher variance and can be harder to learn.

We apply Reverse Markov Learning to estimate the precipitation distribution via sampling. We define the forward path via average pooling as described in Example III in Section~\ref{sec:forward}. As illustrated in Figure~\ref{fig:precip_true}, starting from the original $128\times128$ spatial map, we apply $2\times2$ average pooling to downsample to $64\times64$, reducing the total dimensionality by a factor of 4, and repeat this process until reaching a $2\times2$ resolution. The final step maps to a standard Gaussian. We adopt three kernel sizes---2, 4, and 8---yielding 7, 4, and 3 time steps, respectively. This approach reduces dimensionality through fixed, interpretable maps without learning an additional encoder. In the reverse Markov process, each step learns to reconstruct the high-resolution precipitation field from its low-resolution counterpart---a task that is itself scientifically meaningful and related to (though simpler than) statistical downscaling~\citep{wilby1998statistical,wilby2013statistical}.

\begin{figure}
    \centering
    \begin{tabular}{@{}c@{}c@{}c@{}c@{}c@{}c@{}c@{}c@{}}
        $t=0$ & $t=1$ & $t=2$ & $t=3$ & $t=4$ & $t=5$ & $t=6$ & $t=7$\\
        \includegraphics[width=0.12\textwidth]{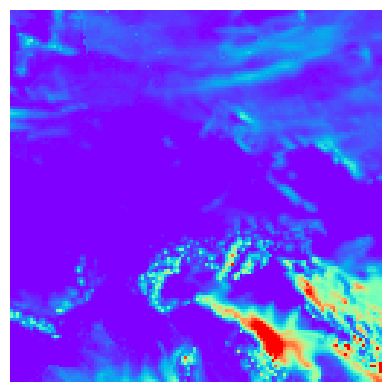}  & 
        \includegraphics[width=0.12\textwidth]{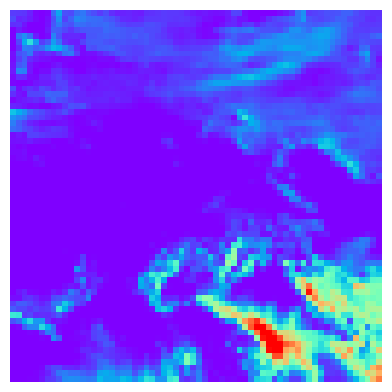} & 
        \includegraphics[width=0.12\textwidth]{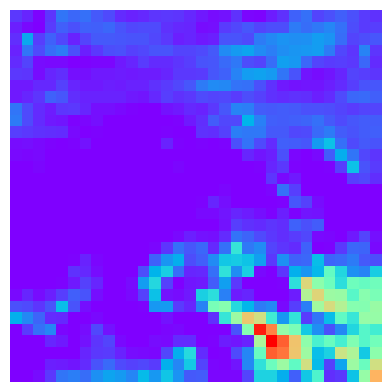} & 
        \includegraphics[width=0.12\textwidth]{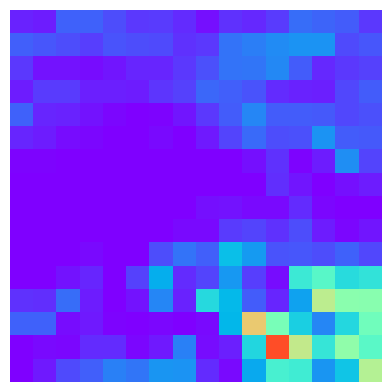} & 
        \includegraphics[width=0.12\textwidth]{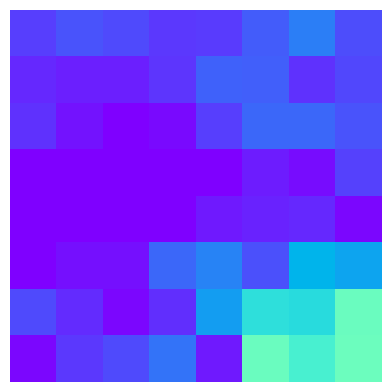} & 
        \includegraphics[width=0.12\textwidth]{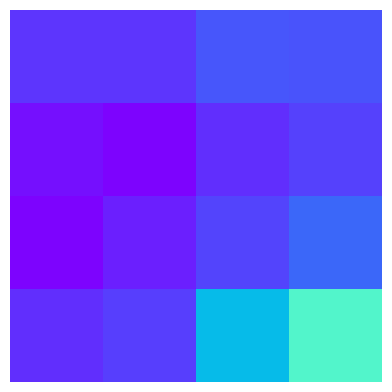} & 
        \includegraphics[width=0.12\textwidth]{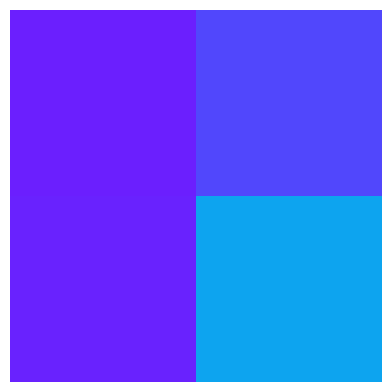} & 
        \includegraphics[width=0.12\textwidth]{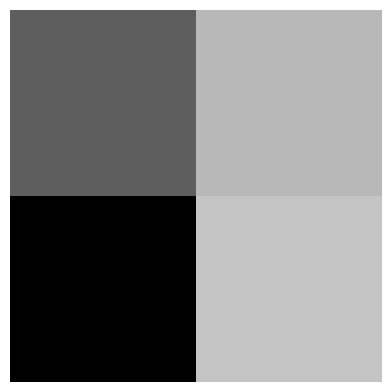}\vspace{-5pt} \\
        \small{$128\times128$} & \small{$64\times64$} & \small{$32\times32$} & \small{$16\times16$} & \small{$8\times8$} & \small{$4\times4$} & \small{$2\times2$} & \small{Gaussian}\\
    \end{tabular}
    \caption{Precipitation data and the average pooling forward process at a factor of $2^2$.}
    \label{fig:precip_true}
\end{figure}

\begin{figure}
    \centering
    \begin{tabular}{@{}c@{}c@{}c@{}}
        \includegraphics[height=0.46\textwidth]{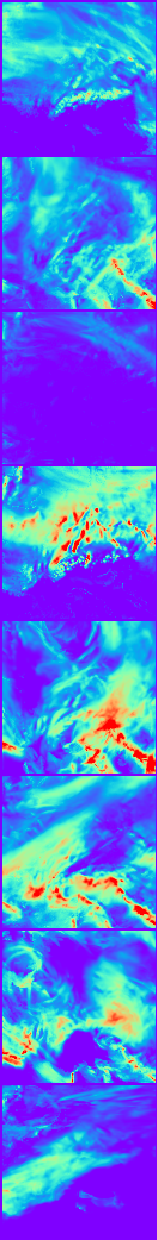}\hspace{4pt} &
        \includegraphics[height=0.46\textwidth]{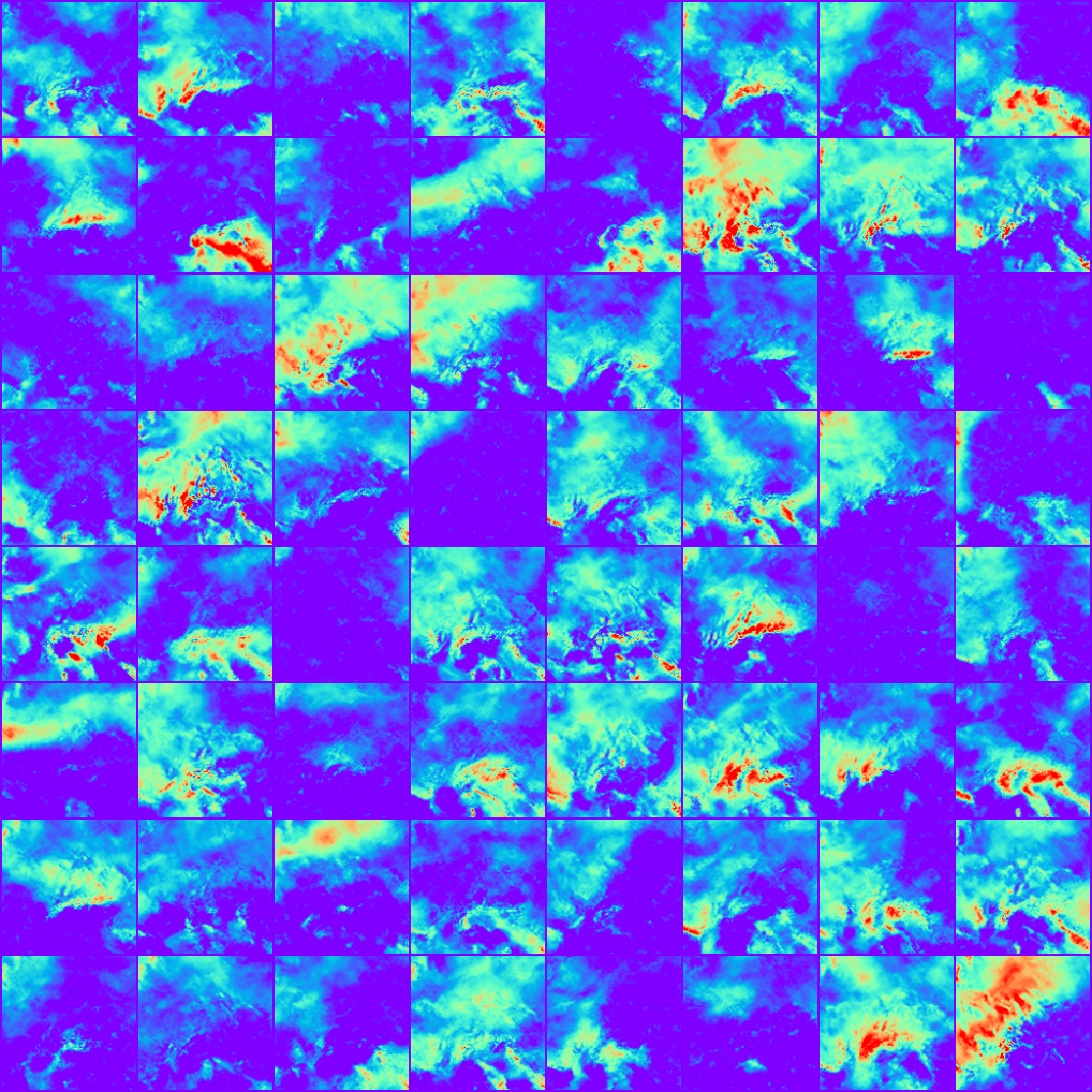}\hspace{4pt} & 
        \includegraphics[height=0.46\textwidth]{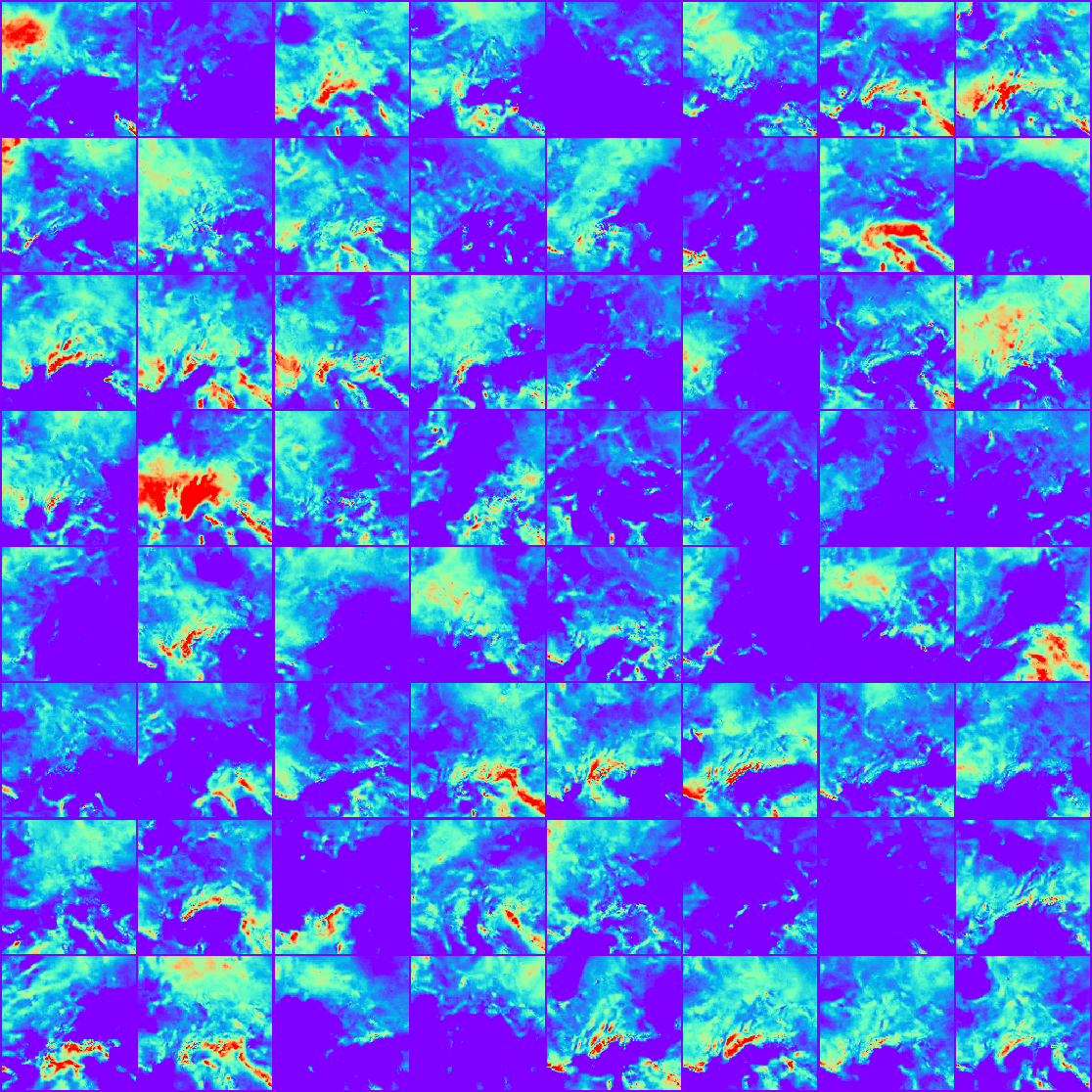} \\
        Truth & Factor of $2^2$ ($T=7$) & Factor of $8^2$ ($T=3$)
    \end{tabular}
    \caption{Generated samples from Reverse Markov Learning with $T=7$ or $T=3$ steps, in comparison to real data. }
    \label{fig:precip_gen}
\end{figure}

Figure~\ref{fig:precip_gen} shows the generated samples from RML with $T=7$ steps (average pooling with a kernel size of 2) and $T=3$ steps (kernel size of 8) in comparison to real data. We can see that samples generated from RML with too few steps ($T=3$) appear patchy and unrealistic, significantly worse than samples obtained from RML with more steps ($T=7$), which visually close to real data. 

Furthermore, we consider several quantitative metrics. To assess the joint distribution of the entire precipitation field, we use the energy distance, defined as \eqref{eq:energy_distance}, between the true and estimated distributions. We also compute the average and maximum of marginal energy distances and Wasserstein distances across each location. In addition, we consider the rank histogram which is a common tool for evaluating probabilistic forecasts by repeatedly tallying the rank of the true observation relative to values from generated samples sorted from lowest to highest~\citep{hamill2001interpretation}. When the probabilistic forecasts are well calibrated, the rank histogram should be as flat as a uniform distribution. Figure~\ref{fig:rankhist} presents the rank histograms, where the histogram of RML samples with $T=7$ looks more uniform (flat) than that of RML samples with $T=3$. For a quantitative metric, we compute the TV distance between the rank histogram and a uniform distribution, which we call the rank histogram TV distance.  

Figure~\ref{fig:precip_metric} summarizes all the metrics as a function of the number of steps. We can see that RML with a larger number of steps consistently performs the best in all metrics. Despite being the largest one, it still only takes 7 steps in total, which is much smaller than what is typically adopted in diffusion models. Besides, we start from a very low dimensional space, which also reduces computational costs. 

\begin{figure}
\centering
\begin{tabular}{ccc}
	\footnotesize{Joint energy distance} & \footnotesize{Average marginal energy distances} & \footnotesize{Max marginal energy distance}\vspace{-2pt}\\
	\includegraphics[width=0.3\textwidth]{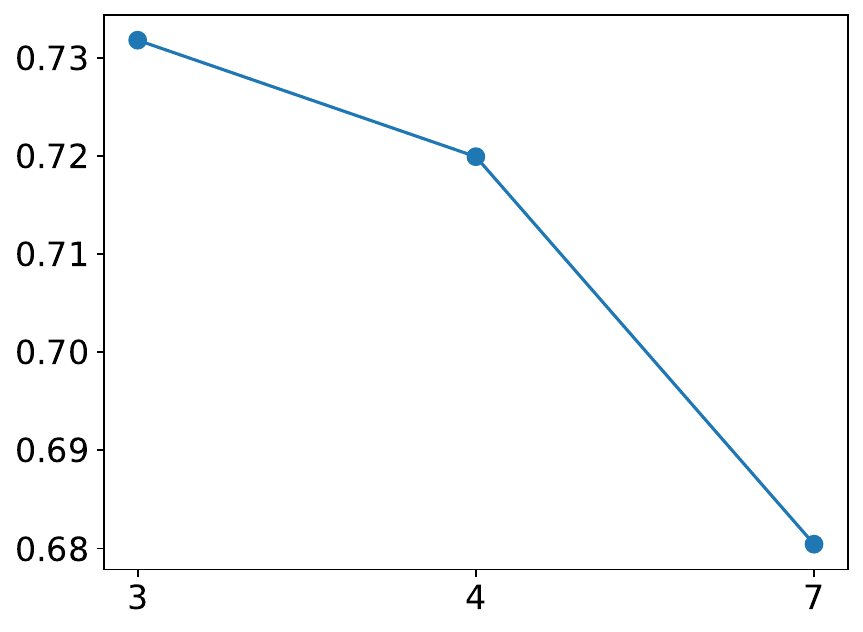} &
	\includegraphics[width=0.3\textwidth]{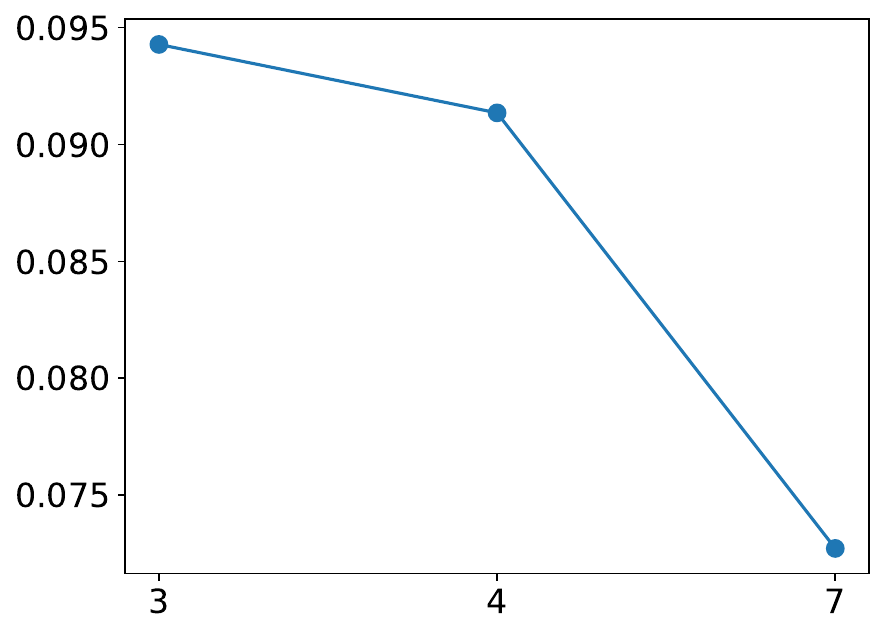} &
	\includegraphics[width=0.3\textwidth]{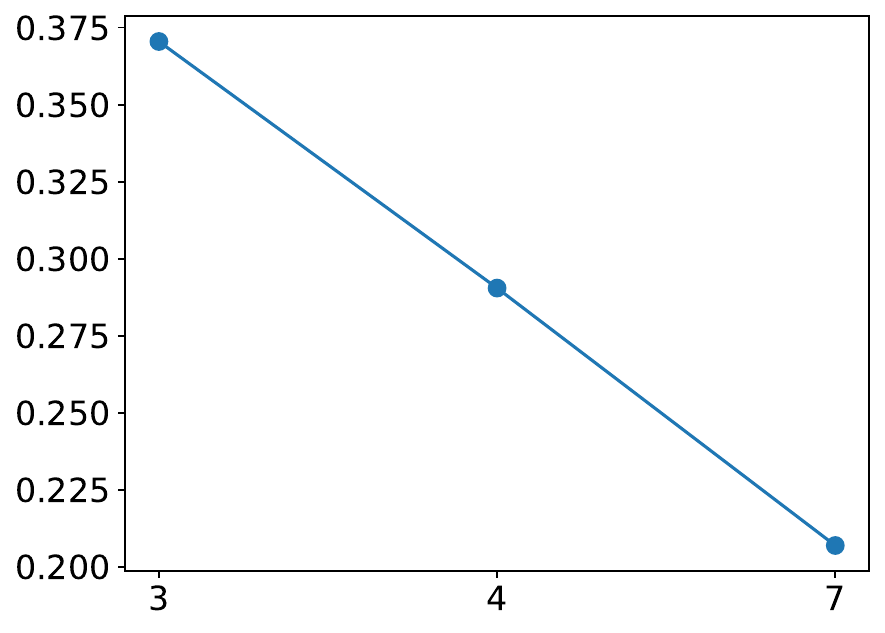}  \\
	\footnotesize{Rank histogram TV distance} & \footnotesize{Average marginal Wasserstein distances} & \footnotesize{Max marginal Wasserstein distance}\vspace{-2pt}\\
	\includegraphics[width=0.3\textwidth]{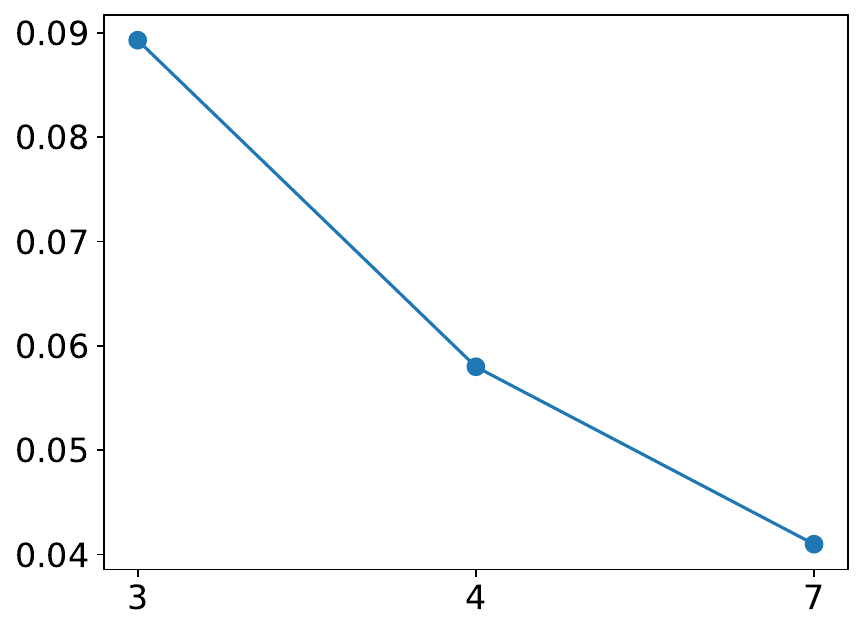} & 
	\includegraphics[width=0.3\textwidth]{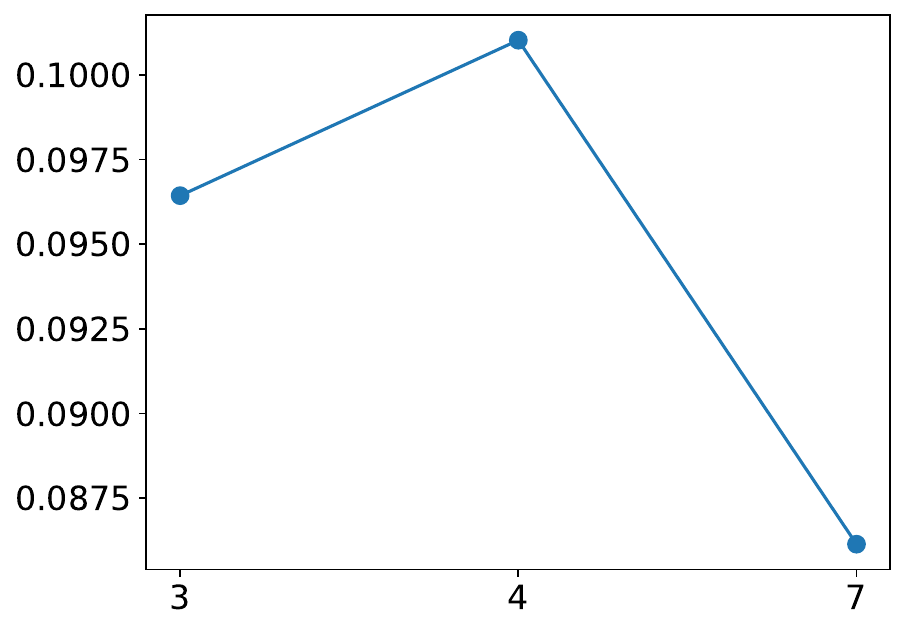} &
	\includegraphics[width=0.3\textwidth]{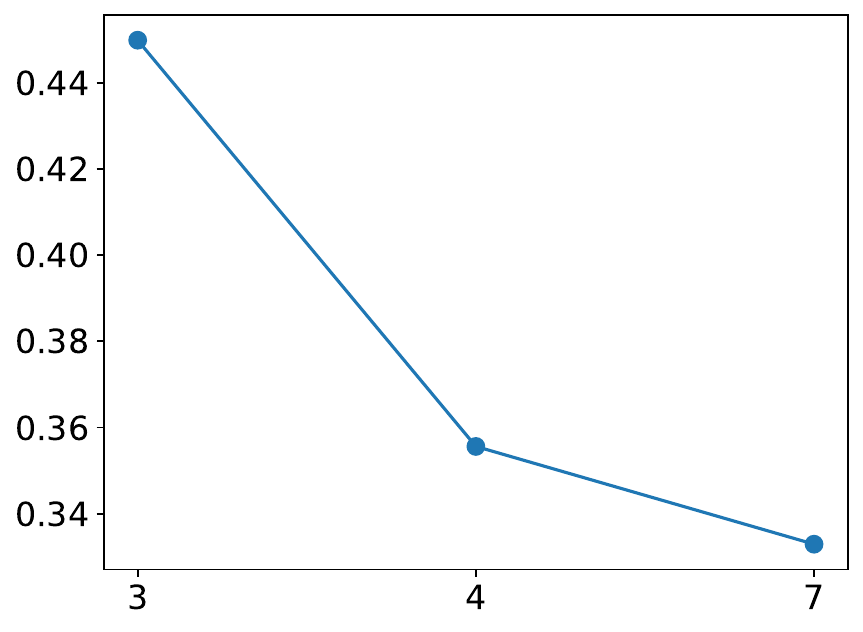}    
\end{tabular}
\caption{Quantitative metrics for learning the regional precipitation distribution as a function of the number of steps in the RML method.}\label{fig:precip_metric}
\end{figure}

\begin{figure}
	\centering
	\begin{tabular}{cc}
		\includegraphics[width=0.35\textwidth]{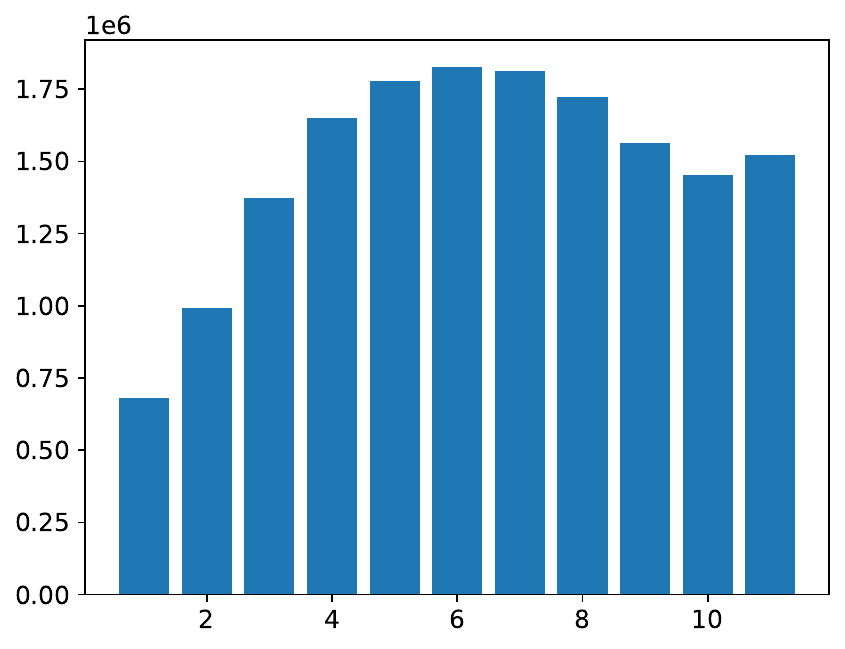} \hspace{4pt}&\hspace{4pt}
		\includegraphics[width=0.35\textwidth]{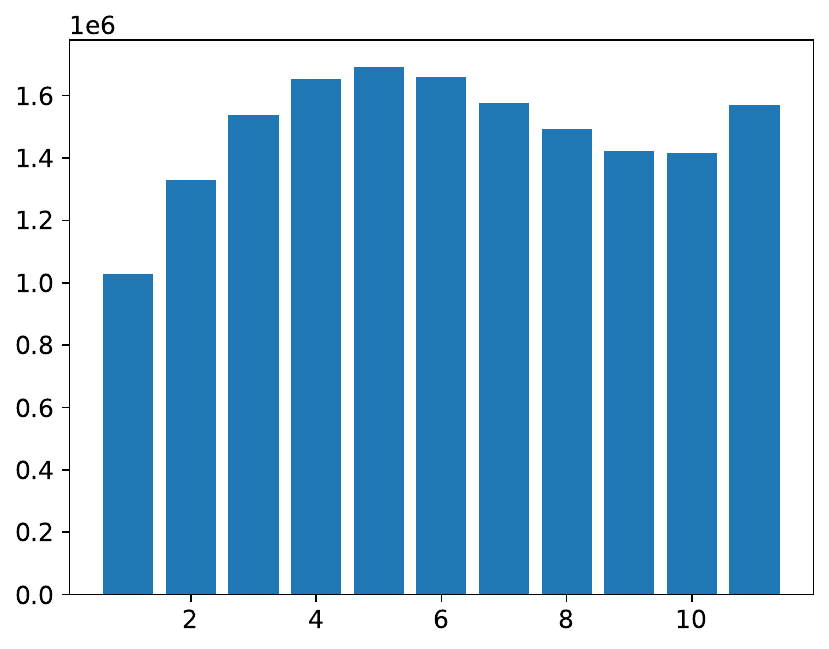}\\
		$T=3$ \hspace{4pt}&\hspace{4pt} $T=7$
	\end{tabular}
	\caption{Rank histograms of RML samples.}\label{fig:rankhist}
\end{figure}

\section{Conclusion and Outlook}
\label{sec:conclusion}
We introduced Reverse Markov Learning (RML), a new framework for generative modeling of complex distributions via a multi-step generative process. Extending engression, RML defines a forward process from data to noise and learns the reverse Markov process that incrementally reconstructs the target distribution from the known distribution.

Key advantages of RML include:
\begin{itemize}\vspace{-3pt}
	\itemsep 2pt
    \item Through multiple steps, RML can learn complex distributions better than one-step generative models such as engression.
    \item RML accommodates more general forward processes than those in diffusion models or flow matching, allowing for pursuing ``optimal" choices in terms of statistical or computational efficiency, or flexibly tailoring to various scientific applications.
    \item RML supports dimension reduction in the forward process, enabling principled low-dimensional diffusion for substantial computational savings in training and inference without losing distributional fidelity, especially in very high dimensions.
    \item The finite-step formulation avoids any discretization error even with relatively few steps, making sampling more efficient than in diffusion models.
\end{itemize}

From a theoretical standpoint, we established the correctness of RML by proving that the proposed reverse process reconstructs the target distribution. We also demonstrated that RML generalizes diffusion models and flow matching, recovering the latter as the continuous limit in a special case. Our statistical analysis ensures that the error occurred in the samples from RML is well controlled even if we encounter errors at each step of RML training. Through specific examples, we further investigated concretely the benefits of RML in estimation efficiency and the statistical impact of the forward-process choice. 

Most theoretical results were illustrated with a mixture-of-Gaussians example. In a challenging real-world task---regional precipitation prediction---RML effectively produced realistic precipitation fields, confirmed by both visual inspection and scientifically meaningful quantitative metrics. These findings underscore the promise of RML for high-dimensional generative modeling.

Looking ahead, scaling up the RML framework in both model and data size would be of great value for pushing its empirical limits. Moreover, RML's flexibility of the forward process invites creative new designs with both theoretical and practical payoffs. 
Finally, 
the RML-inspired enhancement of statistical distances (as sketched in Section~\ref{sec:linsem}) could open an interesting direction for strengthening weak but easy-to-estimate metrics like the energy distance, with potential impact on hypothesis testing.


\bibliography{ref.bib}
\bibliographystyle{apalike}

\newpage
\appendix
\section{Proofs}
\begin{proof}[Proof of Theorem~\ref{thm:rmp}]
Let $\tilde{p}_t(\tilde{x}_t|y)$ be the marginal distribution of 
$\tilde{X}_t$, conditioned on $Y=y$. 
Let $p^*_t(x_t|Y=y)$ be the marginal distribution of 
$X_t$, conditioned on $Y=y$. By the assumption of the forward process, we have
\[
\tilde{p}_T(x_T|y)=p^*_T(x_T|y)=q^*(x_T|y) .
\]
Assume that the statement of
\[
\tilde{p}_t(x_t|y)=p^*_t(x_t|y)
\]
 is true at any $1 \leq t \leq T$. Then
by Algorithm~\ref{alg:rms}, we have
\[
\tilde{p}_t(x_{t-1},x_t|y)
= p_t^*(x_{t-1}|x_t,y) \tilde{p}_t(x_t|y)
= p_t^*(x_{t-1}|x_t,y) p^*_t(x_t|y)
= p^*_t(x_{t-1},x_t|y) .
\]
This means that the theorem holds at $t$. 
By taking marginal at $x_{t-1}$, this implies also that
\[
\tilde{p}_{t-1}(x_{t-1}|y)=p^*_{t-1}(x_{t-1}|y) .
\]
Now we obtain the desired result by induction on $t$.
\end{proof}

\begin{proof}[Proof of Theorem~\ref{thm:continuous-discretized}]
We know that a sample of $p^*_t(X_{t-1},X_t|y)$ is given by
\[
  h(Z,y,(t-1)/T) , h(Z,y,t/T) : \qquad Z \sim \cP_{Z|Y}(\cdot|Y=y) .
\]
We know that
\[
h(Z,y,(t-1)/T) = \tilde{x}_t - (1/T) \frac{\partial}{\partial s} h(Z,y,t/T) + o(1/T) , \qquad \tilde{x}_t=h(Z,y,t/T) .
\]
Therefore the mean of the reverse Markov distribution $p^*_t(X_{t-1}|X_t=\tilde{x}_t,y)$ in
Algorithm~\ref{alg:rms} is given by
\[
\tilde{x}_t - (1/T) g(\tilde{x}_t,y,t/T) + o(1/T) .
\]
The variance of $p^*_t(X_{t-1}|X_t=\tilde{x}_t,y)$ is $O(1/T^2)$ by using the uniform bounded variance assumption. 
The aggregated variance of $p^*_t(X_{0}|X_t=\tilde{x}_t,y)$ is no more than $O(1/T)$, 
which means that the effect of variance vanishes (in terms of Wasserstein distance)  as $T \to \infty$.
It follows that as $T \to \infty$, the reverse Markov distribution $p^*_t(X_{t-1}|X_t=\tilde{x}_t,y)$
becomes deterministic, which can be characterized by a shift in the mean:
\[
\tilde{x}_t - (1/T) g(\tilde{x}_t,y,t/T) + o(1/T) .
\]
Since the error term $o(1/T)$ does not affect the convergence in Wasserstein distance as $T \to \infty$, we obtain the desired result from Theorem~\ref{thm:rmp}. 
\end{proof}

\begin{proof}[Proof of Lemma~\ref{lem:func-compose}]
Let $\Pi_t^y(x',x)$ be an optimal coupling between measures $\hat{q}_{s+1}(x'|y)$ and $q_{s+1}^*(x|y)$ which achieve the $W_2(\hat{q}_{s+1}(\cdot|y),q_{s+1}^*(\cdot|y))$. 
We have
\begin{align*}
u_s=&\left(\E_{y \sim p^*} W_2(\hat{p}_s(\cdot|y), p_s^*(\cdot|y))^2\right)^{1/2}\\
\leq& \left(\E_{y \sim p^*} \E_{x \sim p_{s+1}^*} W_2(\hat{p}_s(\cdot|x,y), p_s^*(\cdot|x,y))^2\right)^{1/2} 
 + \left(\E_{y \sim p^*}  \E_{(x'x) \sim \Pi_t^y} W_2(\hat{p}_s(\cdot|x',y), \hat{p}_s(\cdot|x,y))^2\right)^{1/2}\\
 \leq&  \delta_t
 + \left(\E_{y \sim p^*}  \E_{(x'x) \sim \Pi_t^y} \E_{\varepsilon \sim \cN(0,I)} \|\hat{g}_s(x',y,\varepsilon)- \hat{g}_s(x,y,\varepsilon)\|_2^2\right)^{1/2} \\
 \leq&  \delta_{s+1}  + L_{s+1} \left(\E_{y \sim p_*}  \E_{(x'x) \sim \Pi_t^y}  \|x'-x\|_2^2\right)^{1/2} 
 = \delta_{s+1}  + L_{s+1} u_{s+1} .
\end{align*}
Using induction, we obtain the desired result. 
\end{proof}

\begin{proof}[Proof of Theorem~\ref{thm:alternating}]
Consider the true joint distribution $p^*(\tilde{X}_1,\ldots,\tilde{X}_T|y)=p^*(\tilde{X}_T|y)\prod_{t=1}^T p_t^*(\tilde{X}_{t-1}|\tilde{X}_t,y)$ of the reverse process, and the estimated distribution  $\hat{p}(\tilde{X}_1,\ldots,\tilde{X}_T|y)=p^*(\tilde{X}_T|y)\prod_{t=1}^T \hat{p}_t(\tilde{X}_{t-1}|\tilde{X}_t,y)$ of the reverse process.
Given $x \sim p_t^*(X_t=x|y)$, let $\tilde{p}_{t}(\cdot|x,y)$ be the distribution of 
$f_t(\tilde{X}_{s(t-1)}',\epsilon')|\tilde{X}_t=x,y$  obtained at step $t$ in Algorithm~\ref{alg:rmg-alt}.
Let $\tilde{p}_t^*(\cdot|x,y)$ the the distribution of $f_t(X_{s(t-1)}',\epsilon')|\tilde{X}_t=x,y$ where $X_{s(t-1)}$ is generated from Algorithm~\ref{alg:rmg-alt} using the true reverse process. 
Then using the same argument of Lemma~\ref{lem:func-compose}, we obtain
\[
\left(\E_{y \sim p^*} \E_{x \sim p_t^*(X_t=x|y)} W_2(p_t^*(X_{t-1}|X_t=x,y)||\tilde{p}_t(\cdot|x,y))^2\right)
\leq \alpha_{t-1} \left(\sum_{t'=s(t-1)+1}^{t-1} L_{s(t-1)}^{t'} \delta_{t'+1}\right) .
\]
Since $\tilde{X}_{t-1}$ and $X_{t-1}$ are generated from adding the same Gaussian noise to $\tilde{p}_t(\cdot|x,y)$ and $\tilde{p}_t^*(\cdot|x,y)$ respective, we know from Gaussian KL divergence formula that
\[
\E_{x \sim p_t^*} \KL(p_t^*(\cdot|x_t=x,y)||\hat{p}_t(\cdot|x_t=x,y)) \leq \frac{\alpha_{t-1}^2 \left(\sum_{t'=s(t-1)+1}^{t-1} L_{s(t-1)}^{t'} \delta_{t'+1}\right)^2}{2 \beta_{t-1}^2}  .
\]
Using Chain rule of KL-divergence, we obtain
\begin{align*}
&\E_{y \sim p_*} \KL(p_1^*(\cdot|y)||\hat{p}_1(\cdot|y)) 
\leq \E_{y \sim p_*} \KL(p^*(\tilde{X}_1,\ldots,\tilde{X}_T|y)||\hat{p}(X_1,\ldots,X_T|y)) 
\\
=& \sum_{t=2}^{T} \E_{y \sim p^*} \E_{x \sim p_t^*} \KL(p_t^*(\tilde{X}_{t-1}|\tilde{X}_{t}=x,y)||\hat{p}_t(X_{t-1}|X_t=x,y)) \\
\leq& \sum_{t=2}^{T} \frac{\alpha_{t-1}^2 \left(\sum_{t'=s(t-1)+1}^{t-1} L_{s(t-1)}^{t'} \delta_{t'+1}\right)^2}{2 \beta_{t-1}^2}  .
\end{align*}
This implies the result by shifting summation of $t$ from $2$ to $T$ by $1$: from $1$ to $T-1$. 
\end{proof}

\begin{proof}[Proof of Theorem~\ref{thm:asy_var}]
All three estimators are classical M-estimators with the regularity conditions met, yielding consistency and asymptotic normality \citep{van2000asymptotic}. To compute the asymptotic variance, we first consider the RML estimator. For each $k=2,\dots,d$, define $u(b_k)=X_k-b_k^\top X_{1:(k-1)}$ and $\rho(t)=\E_\varepsilon|t-\varepsilon|$ for $\varepsilon\sim\cN(0,1)$. Then $\ell_k(b_k;x_1,\dots,x_k)=\rho(u(b_k))$. Note that $\rho'(t)=\E[\mathrm{sign}|t-\varepsilon|]=2\Phi(t)-1$ and $\rho''(t)=2\phi(t)$ with $\Phi$ and $\phi$ being the cdf and pdf of the standard Gaussian. Compute the gradient and hessian of the loss as follows:
\begin{align*}
	\dot\ell(b_k;X_1,\dots,X_k) &= \nabla_{b_k}\rho(u(b_k)) = -(2\Phi(u(b_k))-1)X_{1:(k-1)}\\
	\ddot\ell(b_k;X_1,\dots,X_k) &= \nabla_{b_k}\dot\ell(b_k;X_1,\dots,X_k) = -2\phi(u(b_k))X_{1:(k-1)}X_{1:(k-1)}^\top.
\end{align*}
Let $\Sigma_{k-1}=\mathrm{Cov}(X_{1:(k-1)})$. Compute the pieces in the sandwich formula:
\begin{align*}
	A_k := \E[\ddot\ell(b^*_k;X_1,\dots,X_k)] &= -2\E[\phi(X_k-b^\top X_{1:(k-1)})X_{1:(k-1)}X_{1:(k-1)}^\top] \\
	&= -2\E[\phi(\varepsilon_k)]\mathrm{Cov}(X_{1:(k-1)}) = \sqrt{\frac1\pi}\Sigma_{k-1}
\end{align*}
and 
\begin{align*}
	B_k := \E[\dot\ell(b^*_k;X_1,\dots,X_k)\dot\ell(b^*_k;X_1,\dots,X_k)^\top] &= \E[(2\Phi(\varepsilon_k)-1)^2X_{1:(k-1)}X_{1:(k-1)}^\top] \\
	&= \E[(2\Phi(\varepsilon_k)-1)^2]\Sigma_{k-1}=\frac13\Sigma_{k-1}.
\end{align*}
Hence the asymptotic variance of $\hat{b}_k$ is given by
\begin{align*}
	V_k := A_k^{-1}B_kA_k^{-1} = \frac\pi3\Sigma_{k-1}^{-1}.
\end{align*}
Then the trace of the asymptotic variance of $\mathrm{vec}(\hat{B}_{\mathrm{RML}})$ is 
\begin{align*}
	\sum_{k=2}^d \mathrm{tr}(V_k) &= \frac\pi3\sum_{k=2}^d \mathrm{tr}\big((I-B_{1:(k-1),1:(k-1)})^\top(I-B_{1:(k-1),1:(k-1)})\big) \\
	&=\frac{\pi}{3}\left(\frac{d(d-1)}{2}+\sum_{i=2}^{d-1}\sum_{j=1}^{i-1}(d-i){B_{ij}^*}^2\right)
\end{align*}
The asymptotic variance of the MLE can be computed similarly. 

For engression, we consider $d=2$ and denote the only parameter by $b=B_{21}$. Let $M=(I-B)^{-1}$. Then the loss can be written as
\begin{equation*}
	\ell_{\rm eng}(b;x) = \underbrace{\E\|x-M\varepsilon\|_2}_{m(b;x)} - \frac12\underbrace{\E\|M(\varepsilon-\varepsilon')\|_2}_{c(b)}.
\end{equation*}
Let $r:=X-M\varepsilon$, $s(u):=u_2/\|u\|_2$, and $v=\varepsilon-\varepsilon'\sim\cN(0,2I_2)$. We first compute the gradient:
$\psi_1(b;x):=\frac{\partial}{\partial b}m(b;x)=-\E[s(r)\varepsilon_1]$ and $\psi_2(b)=-\frac12\frac{\partial}{\partial b}c(b)=-\frac12\E[s(Mv)v_1]$. We take $b^*=0$ without loss of generality for computing the variance.  Then $M^*=I_2$ and $r=X-\varepsilon=\varepsilon'-\varepsilon=:w\sim\cN(0,2I_2)$. Note that $$\E\left[\frac{\partial}{\partial b}\ell_{\rm eng}(b^*;X)\right]=-\E\left[\frac{w_2\varepsilon_1}{\|w\|_2}\right]-\frac12\E\left[\frac{v_1v_2}{\|v\|_2}\right]=0.$$Then
\[
\mathrm{Var}\left[\frac{\partial}{\partial b}\ell_{\rm eng}(b^*;X)\right]=\mathrm{Var}(\psi_1(b^*;X))=\E[\psi_1(b^*;X)^2]=\E\left[\E\left(\frac{(X_2-\varepsilon_2)\varepsilon_1}{\|X-\varepsilon\|_2}\mid X\right)^2\right] \approx 3.8\times10^{-3}.
\]
We then compute the hessian. 
\begin{align*}
	\E\left[\frac{\partial^2}{\partial^2 b}\ell_{\rm eng}(b^*;X)\right] &= \E\left[\frac{\partial}{\partial b}\psi_1(b^*;X)\right] + \frac{\partial}{\partial b}\psi_2(b^*) \\
	&= -\E\left[\frac{\partial}{\partial b}S(r)\varepsilon_1\right] - \frac12\E\left[\frac{\partial}{\partial b}S(Mv)v_1\right] \\
	&= -\E\left[-\frac{\varepsilon_1^2}{\|r\|_2}+\frac{r_2^2\varepsilon_2^2}{\|r\|_2^3}\right] - \frac12\E\left[\frac{v_1^2}{\|v\|_2}-\frac{v_2^2v_1^2}{\|v\|_2^3}\right] \\
	&= \frac{7}{32}\sqrt{\pi} - \frac{3}{16}\sqrt{\pi} = \frac{\sqrt{\pi}}{32}.
\end{align*}
Hence the asymptotic variance of $\hat{b}_{\mathrm{eng}} $ is approximately $\frac{32^2}{\pi}\cdot 3.8\times10^{-3}\approx1.2386$.
\end{proof}

\section{Experimental details and additional results}

For all experiments, we use the Adam optimizer with a learning rate of $1\times10^{-4}$, except for the estimation efficiency in Section~\ref{sec:linsem} for which we use a learning rate of $1\times10^{-2}$. For Gaussian mixture experiments,  we use multilayer perceptrons (MLPs), same as what was used in \citet{shen2024engression}, with 5 layers and 512 neurons per layer. For the climate application, we use convolutional neural network for the last step (from $64\times64$ or $32\times32$ or $16\times16$ to $128\times128$) and MLPs for the remaining steps for which we vectorize the spatial fields, as empirically it does not make a difference to keep using convolutional layers for the low-resolution steps. For regional precipitation data, we used randomly subsampled precipitation fields across all RCM models.

Figures~\ref{fig:mog_inter_t5} and \ref{fig:mog_inter_t10} show the generated samples in the intermediate time steps for the total numbers of steps of 5 and 10, respectively, where we can see how the marginal distributions evolve in the process. With $T=10$, each reverse Markov conditionals are easier to learn, which also shows up in the final generated samples.

Figure~\ref{fig:precip_all_t} shows the generated samples for each time step and illustrates how the distributions of the reverse Markov process evolve in this case. 

\begin{figure}[hb]
	\centering
	\includegraphics[width=\textwidth]{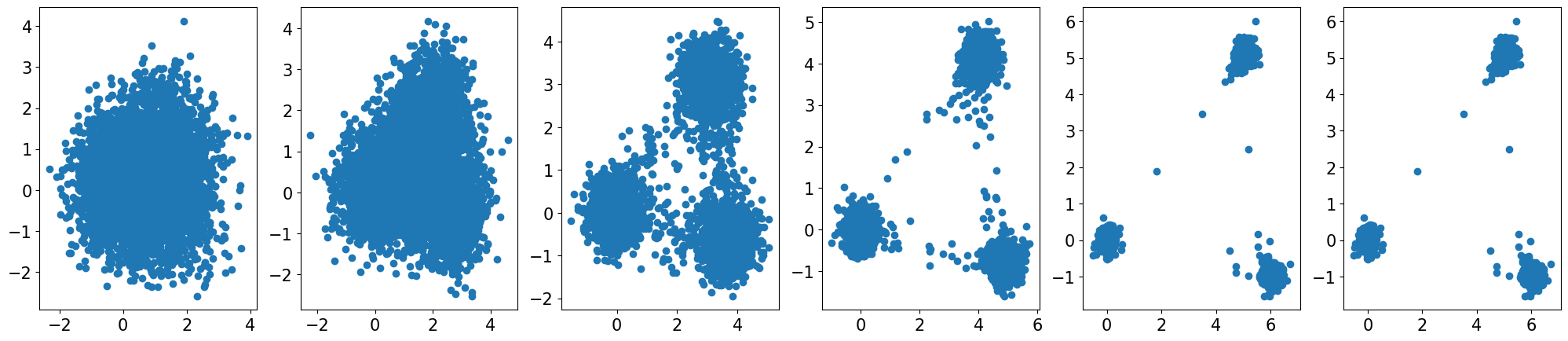}
	\caption{Intermediate steps for $T=5$ for the synthetic example.}\label{fig:mog_inter_t5}
\end{figure}
\begin{figure}[hb]
	\centering
	\includegraphics[width=\textwidth]{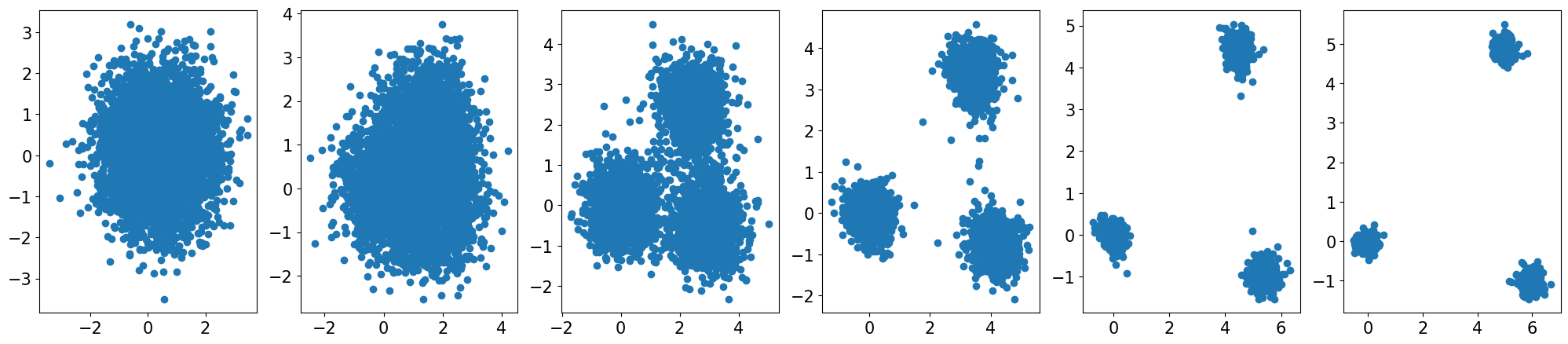}
	\caption{Intermediate steps for $T=10$ for the synthetic example.}\label{fig:mog_inter_t10}
\end{figure}



\begin{figure}
	\centering
	\begin{tabular}{@{}c@{}ccc@{}}
		&Factor of 2 ($T=7$) & Factor of 4 ($T=4$) & Factor of 8 ($T=3$)\\
		\rotatebox[origin=c]{90}{\small{$t=0$}}\hspace{4pt}\smallskip & \includegraphics[width=.18\textwidth,align=c]{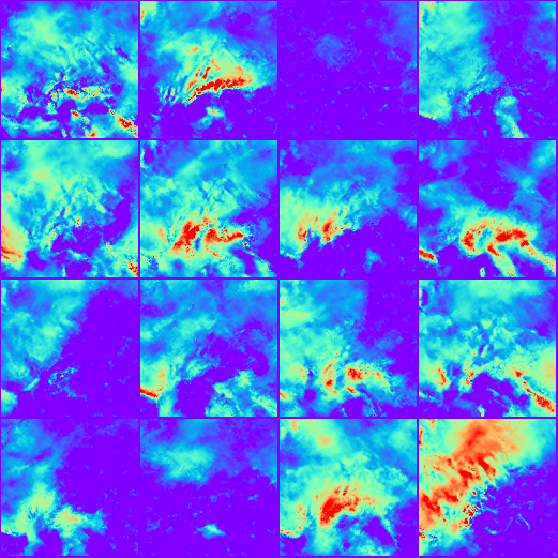}\hspace{4pt} &
		\includegraphics[width=.18\textwidth,align=c]{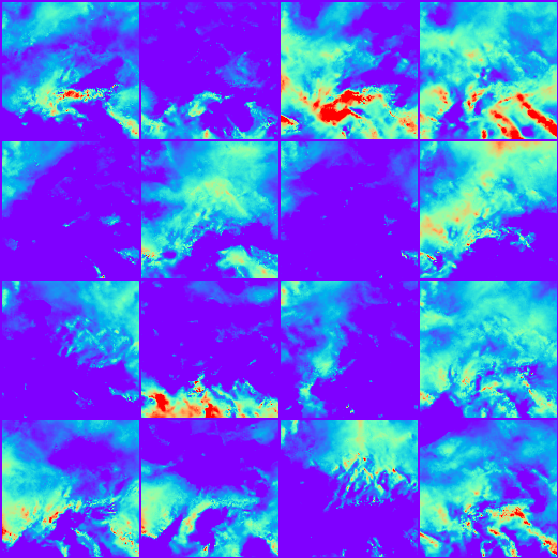}\hspace{4pt} &
		\includegraphics[width=.18\textwidth,align=c]{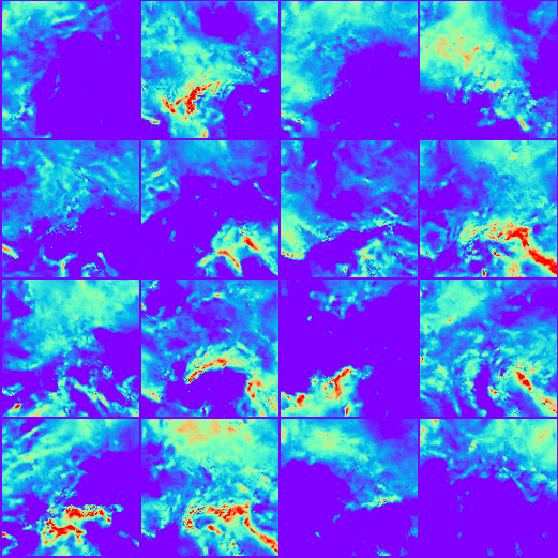} \\
		\rotatebox[origin=c]{90}{\small{$t=1$}}\hspace{4pt}\smallskip & \includegraphics[width=.18\textwidth,align=c]{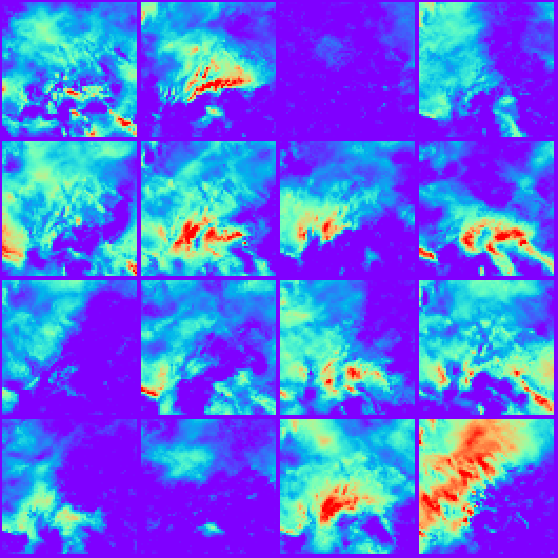}\hspace{4pt} & &\\
		\rotatebox[origin=c]{90}{\small{$t=2$}}\hspace{4pt}\smallskip & \includegraphics[width=.18\textwidth,align=c]{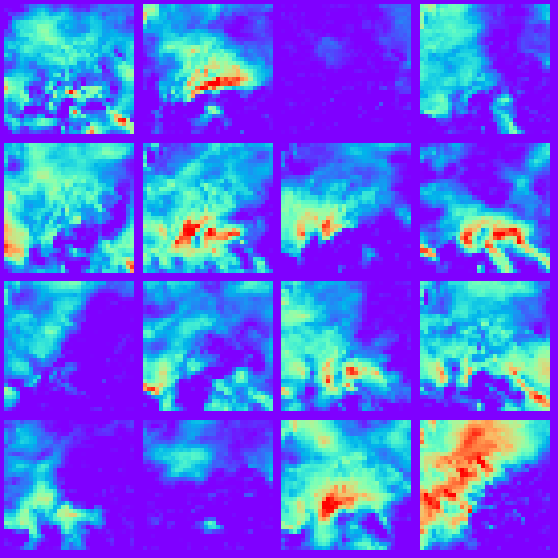}\hspace{4pt} &
		\includegraphics[width=.18\textwidth,align=c]{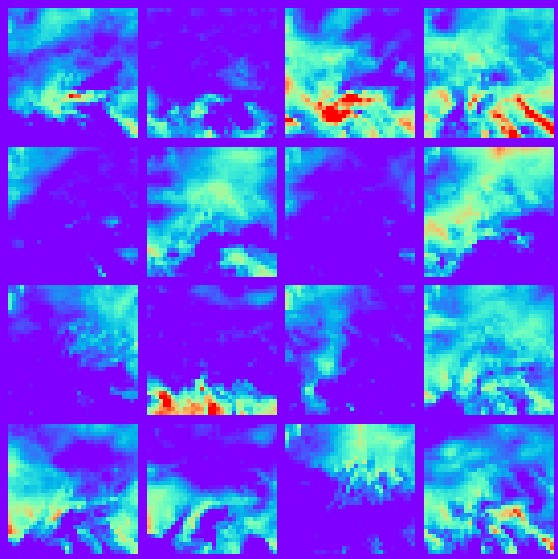}\hspace{4pt} &
		 \\
		\rotatebox[origin=c]{90}{\small{$t=3$}}\hspace{4pt}\smallskip & \includegraphics[width=.18\textwidth,align=c]{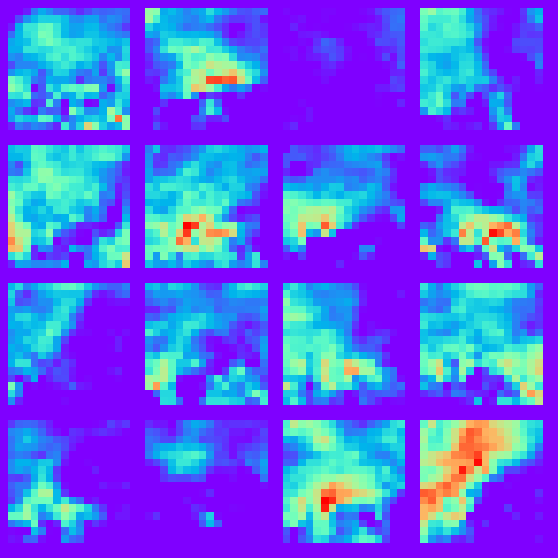}\hspace{4pt} & \hspace{4pt}&
		\includegraphics[width=.18\textwidth,align=c]{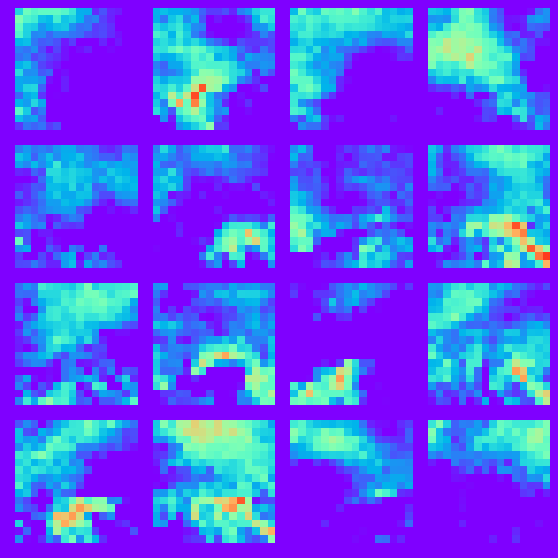}\\
		\rotatebox[origin=c]{90}{\small{$t=4$}}\hspace{4pt}\smallskip & \includegraphics[width=.18\textwidth,align=c]{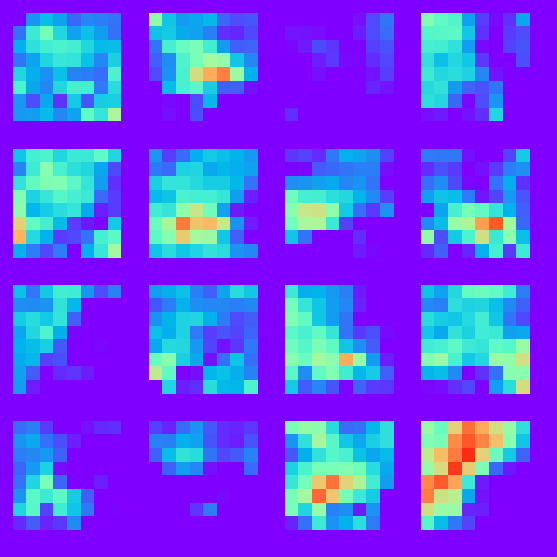}\hspace{4pt} &
		\includegraphics[width=.18\textwidth,align=c]{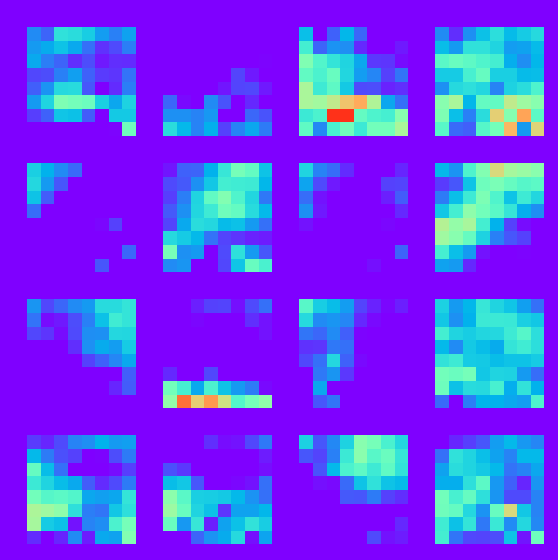}\hspace{4pt} &		\\
		\rotatebox[origin=c]{90}{\small{$t=5$}}\hspace{4pt}\smallskip & \includegraphics[width=.18\textwidth,align=c]{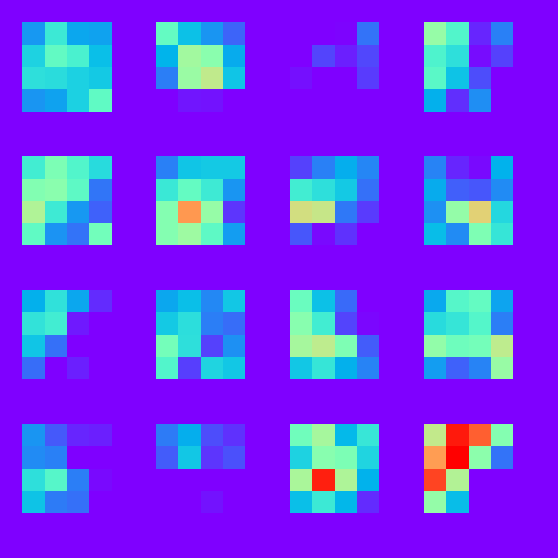}\hspace{4pt} & & \\
		\rotatebox[origin=c]{90}{\small{$t=6$}}\hspace{4pt}\smallskip & \includegraphics[width=.18\textwidth,align=c]{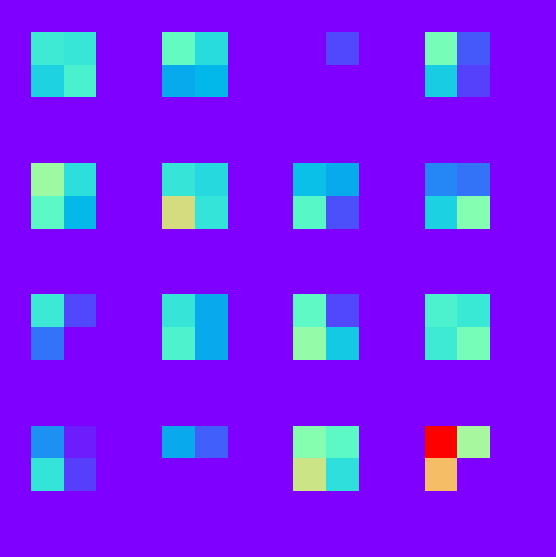}\hspace{4pt} &
		\includegraphics[width=.18\textwidth,align=c]{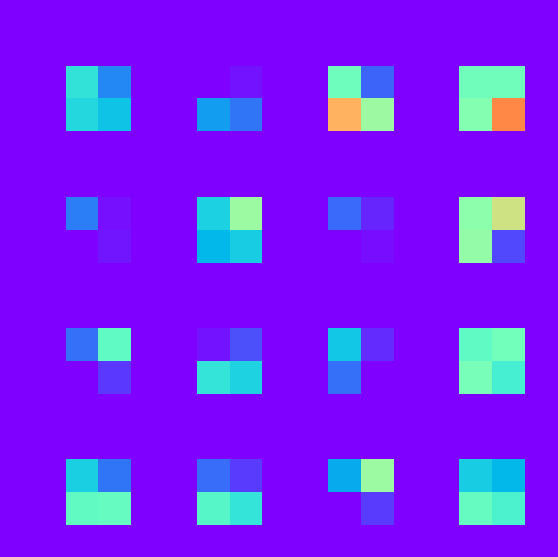}\hspace{4pt} &
		\includegraphics[width=.18\textwidth,align=c]{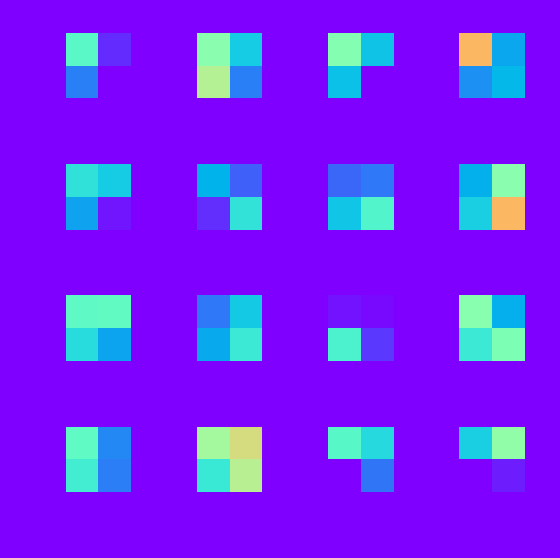} 	\vspace{-0.1in}	
	\end{tabular}
	\caption{Generated samples at different time steps (resolutions) for monthly precipitation data.}\label{fig:precip_all_t}
\end{figure}


\end{document}